\DeclareMathOperator*{\argmin}{arg\,min}
\newtheorem{fact}{Fact}[section]
\newtheorem{theorem}{Theorem}[section]
\newtheorem{lemma}[theorem]{Lemma}
\newtheorem{corollary}[theorem]{Corollary}
\newtheorem{definition}[theorem]{Definition} 
\newtheorem{remark}[theorem]{Remark}
\newenvironment{proofof}[1]{\smallskip\noindent{\bf Proof of #1}:}{$\hfill \Box$\\}
\newcommand{\cP}{\mathcal{P}}
\newcommand{\rank}{\mathsf{rank}}
\newcommand{\onevec}{\mathbf{1}}
\newcommand{\oracle}{\mathbf{oracle}}
\newcommand{\R}{\mathbb{R}}
\newcommand{\N}{\mathbb{N}}
\DeclareMathOperator{\Tr}{Tr}
\title{The Price of Fair PCA: One Extra Dimension}
\author{
   Samira Samadi \\
   Georgia Tech \\
   \texttt{ssamadi6@gatech.edu}\\
\And
Uthaipon Tantipongpipat \\
  Georgia Tech \\
  \texttt{tao@gatech.edu} \\
 \And
Jamie Morgenstern \\
  Georgia Tech \\
  \texttt{jamiemmt.cs@gatech.edu} \\
 \AND
 Mohit Singh \\
  Georgia Tech \\
  \texttt{mohitsinghr@gmail.com} \\
 \And
Santosh Vempala \\
 Georgia Tech \\
 \texttt{vempala@cc.gatech.edu} \\
}
\begin{document}

\maketitle

\begin{abstract}
  
  We investigate whether the standard dimensionality
  reduction technique of PCA inadvertently produces data
  representations with different fidelity for two different
  populations. We show on several real-world data sets, PCA has higher
  reconstruction error on population $A$ than 
  on 
  $B$ (for example, women
  versus men or lower- versus higher-educated individuals).
  This can happen even when the data set has a similar number of samples from $A$ and $B$.
  This motivates our study of dimensionality reduction
  techniques which maintain similar fidelity for $A$ and $B$. We define the notion of Fair PCA and give a
  polynomial-time algorithm for finding a low dimensional representation of the data which is
  nearly-optimal with respect to this measure. Finally, we show on real-world data sets that our algorithm can be used to  efficiently generate a fair low dimensional representation of the data. 
  \end{abstract}

\section{Introduction}

In recent years, the ML community has witnessed an onslaught of
charges that real-world machine learning algorithms have produced
``biased'' outcomes. The examples come from diverse and impactful
domains.  Google Photos labeled African Americans as
gorillas~\citep{twitter,simonite_2018} and returned queries for CEOs
with images overwhelmingly male and white~\citep{kay2015unequal},
searches for African American names caused the display of arrest
record advertisements with higher frequency than searches for white
names~\citep{Sweeney13}, facial recognition has wildly different
accuracy for white men than dark-skinned women~\citep{gendershades},
and recidivism prediction software has labeled low-risk African
Americans as high-risk at higher rates than low-risk white
people~\citep{propublica}.

The community's work to explain these observations has roughly fallen
into either ``biased data'' or ``biased algorithm'' bins.  In some cases, the
training data might under-represent (or over-represent) some group, or
have noisier labels for one population than another, or use an
imperfect proxy for the prediction label (e.g., using arrest records
in lieu of whether a crime was committed). Separately, issues of
imbalance and bias might occur due to an algorithm's behavior,
such as focusing on accuracy across the entire distribution rather
than guaranteeing similar false positive rates across populations, or
by improperly accounting for confirmation bias and feedback loops in
data collection. If an algorithm fails to distribute loans or bail to
a deserving population, the algorithm won't receive additional data
showing those people would have paid back the loan, but it will
continue to receive more data about the populations it (correctly)
believed should receive loans or bail.

Many of the proposed solutions to ``biased data'' problems amount to
re-weighting the training set or adding noise to some of the labels;
for ``biased algorithms'', most work has focused on maximizing
accuracy subject to a constraint forbidding (or penalizing) an unfair
model. Both of these concerns and approaches have significant merit,
but form an incomplete picture of the ML pipeline and where unfairness
might be introduced therein. Our work takes another step in fleshing
out this picture by analyzing when \emph{dimensionality reduction}
might inadvertently introduce bias. We focus on principal component
analysis (henceforth PCA), perhaps the most fundamental dimensionality
reduction technique in the sciences~\citep{kpfrs1901lines,hotelling1933analysis, jolliffe1986principal}.  We show several real-world data sets for which PCA
incurs much higher average reconstruction error for one population than
another, even when the populations are of similar
sizes. Figure~\ref{fig:wpca} shows that PCA on labeled faces in the
wild data set (LFW) has higher reconstruction error for women than men
even if male and female faces are sampled with equal weight.

\begin{figure*}[t]
\centering
  \includegraphics[width=0.47\linewidth, height = 4cm]{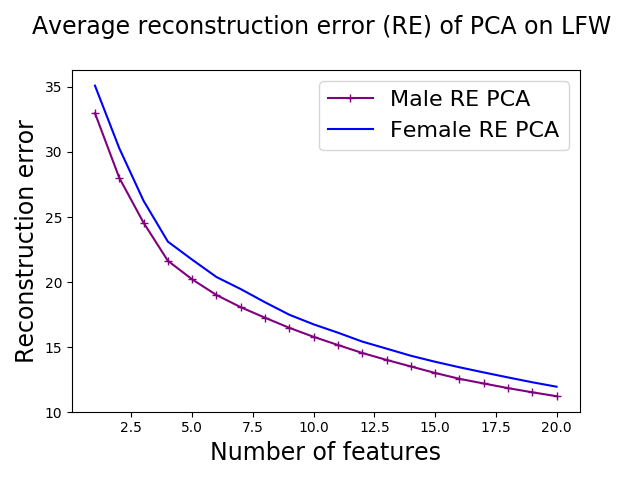}
  \hspace{.5cm}
  \includegraphics[width=0.47\linewidth, height = 4cm]{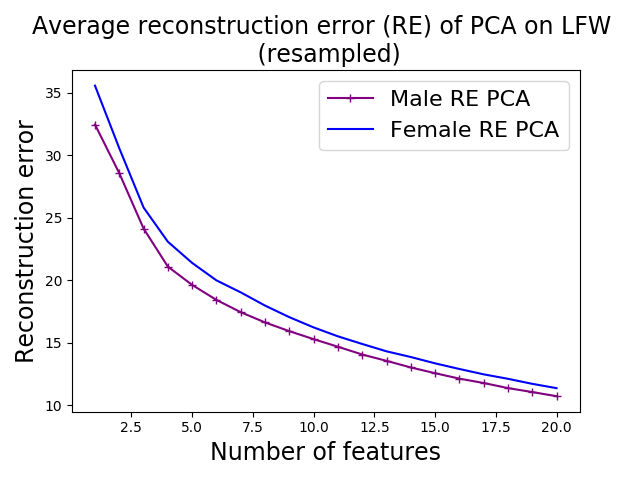}
  \caption{Left: Average reconstruction error of PCA on labeled faces in the wild data set (LFW), separated by
  gender. Right: The same, but sampling 1000 faces with men and women equiprobably 
  (mean over 20 samples).}
  \label{fig:wpca}
\end{figure*}

This work underlines the importance of considering fairness and bias
at every stage of data science, not only in gathering and documenting
a data set~\citep{datasheets} and in training a model, but also in any
interim data processing steps. Many scientific disciplines have
adopted PCA as a default preprocessing step, both to avoid the curse
of dimensionality and also to do exploratory/explanatory data analysis
(projecting the data into a number of dimensions that humans can more
easily visualize). The study of human biology, disease, and the
development of health interventions all face both aforementioned
difficulties, as do numerous economic and financial analysis. In such
high-stakes settings, where statistical tools will help in making
decisions that affect a diverse set of people, we must take particular
care to ensure that we share the benefits of data science with a
diverse community.

We also emphasize this work has implications for representational
rather than just allocative harms, a distinction drawn by 
\citet{kate17} between how people are represented and what
goods or opportunities they receive. Showing primates in search
results for African Americans is repugnant primarily due to its
representing and reaffirming a racist painting of African Americans,
not because it directly reduces any one person's access to a
resource. If the default template for a data set begins with running
PCA, and PCA does a better job representing men than women, or white
people over minorities, the new representation of the data set itself
may rightly be considered an unacceptable sketch of the world it aims
to describe.

Our work proposes a different linear dimensionality reduction
which aims to represent two populations~$A$ and $B$ with similar
{\it fidelity}---which we formalize in terms of {\it reconstruction error}. Given an $n$-dimensional data set and its $d$-dimensional approximation, the reconstruction error of the data 
with respect to its low-dimensional approximation is the sum of squares of distances between the original data points and their approximated points in the $d$-dimensional subspace. To eliminate the effect of size of a population, we focus on  average reconstruction error over a population.  
One possible objective for our goal  
would find a $d$-dimensional approximation of the data
which minimizes the
maximum reconstruction error over the two populations. However, this
objective doesn't avoid grappling with the fact that population $A$
may perfectly embed into $d$ dimensions, whereas $B$ might require
many more dimensions to have low reconstruction error. 
In such cases, this objective would not necessarily favor a solution with average
reconstruction error of $\epsilon$ for $A$ and $y\gg\epsilon$ for $B$ over one with
$y$ error for $A$ and $y$ error for $B$. This holds even
if $B$ requires $y$ reconstruction error to be embedded into $d$~dimensions and thus the first solution is nearly optimal for both populations
in $d$ dimensions.

This motivates our focus on finding a projection which minimizes the
maximum \emph{additional} or \emph{marginal} reconstruction error for each population
above the optimal $n$ into $d$ projection for that population
alone. This quantity captures how much a population's reconstruction
error increases by including another population in the dimensionality
reduction optimization.  
Despite this computational problem appearing
more difficult than solving ``vanilla'' PCA, we introduce a
polynomial-time algorithm which finds an $n$ into $(d+1)$-dimensional
embedding with objective value better than any $d$-dimensional
embedding. 
Furthermore, we show that optimal solutions have equal additional
average error for populations $A$ and $B$.

\paragraph{Summary of our results}

We show PCA can overemphasize the reconstruction error for
one population over another (equally sized) population, and we should
therefore think carefully about dimensionality reduction in
domains where we care about fair treatment of different
populations. We propose a new dimensionality reduction problem which
focuses on representing $A$ and $B$ with similar additional
error over projecting $A$ or $B$ individually. We give a
polynomial-time algorithm which finds 
near-optimal solutions to this
problem. Our  algorithm relies on solving a semidefinite program (SDP),
which can be prohibitively slow for practical applications. We note 
that 
it
is possible to (approximately) solve an SDP with a much faster
multiplicative-weights style algorithm, whose running time in practice
is equivalent to solving standard PCA at most 10-15 times. The details
of the algorithm are given in the full version of this work. We then
evaluate the empirical performance of this algorithm on several
human-centric data sets.

\section{Related work}\label{sec:rw}

This work contributes to the area of fairness for machine learning
models, algorithms, and data representations.  One interpretation of
our work is that we suggest using Fair PCA, rather than PCA, when
creating a lower-dimensional representation of a data set for further
analysis. Both pieces of work which are most relevant to our work
take the posture of explicitly trying to reduce the correlation
between a sensitive attribute 
(such as race or gender)
and the new
representation of the data.  The first piece is a broad line of
work ~\citep{representation,Beutel2017DataDA,
  calmon2017optimized,madras18a,zhang2018mitigating} that aims to design representations
which will be conditionally independent of the protected attribute,
while retaining as much information as possible (and particularly
task-relevant information for some fixed classification
task). The second piece is the work by \citet{convexpca}, who also look to design PCA-like maps which
reduce the projected data's dependence on a sensitive attribute. Our
work has a qualitatively different goal: we aim not to hide a
sensitive attribute, but instead to maintain as much information about
each population after projecting the data.  In other words, we look
for representation with similar richness for population $A$ as $B$,
rather than making $A$ and $B$ indistinguishable.

Other work has developed techniques to obfuscate a sensitive attribute
directly~\citep{PedreshiRT08,KamiranCP10,CaldersV10,KamiranC11,LuongRT11,KamiranKZ12,KamishimaAAS12,HajianD13,FeldmanFMSV15,ZafarVGG15,FishKL16,AdlerFFRSSV16}.
 This line of work diverges
from ours in two ways. First, these works focus on representations
which obfuscate the sensitive attribute rather than a representation
with high fidelity regardless of the sensitive attribute. Second, most of
these works do not give formal guarantees on how much an objective
will degrade after their transformations. Our work directly minimizes
the amount by which each group's marginal reconstruction error increases.

Much of the other work on fairness for learning algorithms focuses on
fairness in classification or scoring~\citep{fta, hardt2016,
  kleinberg2016, chouldechova2017fair}, or online learning
settings~\citep{JKMR16, KKMPRVW17, EFNSV17, ensign2017runaway}. These
works focus on either statistical parity of the decision rule, or
equality of false positives or negatives, or an algorithm with a fair
decision rule. All of these notions are driven by a single learning
task rather than a generic transformation of a data set, while our work
focuses on a ubiquitous, task-agnostic preprocessing step.

\section{Notation and vanilla PCA}
We are given $n$-dimensional data points represented as rows of matrix
$M \in \R^{m \times n}$. We will refer to the {\it set} and {\it
  matrix} representation interchangeably.  The data consists of two 
subpopulations $A$ and $B$ corresponding to two groups with different value of 
a binary sensitive attribute (e.g., males and females).  We denote by
\(\left[ \begin{array}{c} A \\ B \end{array} \right] \) the
concatenation of two matrices \(A,B\) by row. We refer to the $i^{th}$
row of $M$ as $M_i$, the $j^{th}$ column of $M$ as $M^j$ and the
$(i,j)^{th}$ element of $M$ as $M_{ij}$. We denote the Frobenius norm
of matrix $M$ by $\|M\|_F$ and the $2$-norm of the vector $M_i$ by
$\|M_i\|$. For 
$k\in \N$, we write $[k]:=\{1, \ldots, k\}$.
$|A|$
denotes the  size of a set $A$. Given two matrices $M$ and $N$ of the same
size, the Frobenius inner product of these matrices is defined as
$\langle M,N \rangle= \sum_{ij} M_{ij}N_{ij} = \Tr(M^TN)$.


\subsection{PCA}
This section recalls 
useful facts about PCA that we use in
later sections. We begin with a reminder of the definition of the PCA problem in
terms of minimizing the reconstruction error of a data set.

\begin{definition} (PCA problem) \label{def:fair-PCA}
 Given a matrix
  $M \in \mathbb{R}^{m \times n}$, find a matrix
  $\widehat{M} \in \R^{m \times n}$ of rank at most $d$ $(d\leq n)$
  that minimizes $\| M-\widehat{M} \|_F$.
\end{definition}

We will refer to $\widehat{M}$ as an optimal rank-$d$ approximation
of $M$. The following well-known fact characterizes the solutions to this classic problem 
[e.g., \citealp{shalev2014understanding}].

\begin{fact}
\label{lem:optPCA}
If $\widehat{M}$ is a solution to the PCA problem, then $\widehat{M} = MWW^T$ for a
matrix $W \in \R^{n\times d}$ with $W^TW = I$. 
The columns of  $W$ are eigenvectors corresponding to top $d$ eigenvalues of  
$M^TM$.
\end{fact}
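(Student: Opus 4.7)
The plan is to prove Fact~\ref{lem:optPCA} by reducing the problem to a trace-maximization over Stiefel matrices and invoking the classical variational characterization of eigenvalues. The argument has three main stages: (i) reduce the search over rank-$d$ matrices to a search over $d$-dimensional subspaces together with an optimal projection onto that subspace; (ii) rewrite the resulting objective as a trace; (iii) identify the maximizer using the spectral theorem.

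First I would fix a candidate rank-$d$ solution $\widehat{M}$ and write it in thin-SVD form $\widehat{M} = XW^T$ with $W \in \R^{n \times d}$, $W^TW = I$, and $X \in \R^{m \times d}$. The column space of $W$ is then the row space of $\widehat{M}$. For this fixed $W$, I would minimize $\|M - XW^T\|_F^2$ over $X$; since the rows of $XW^T$ lie in the row subspace $\mathrm{col}(W)$, the Pythagorean theorem (equivalently, setting the gradient in $X$ to zero) forces $X = MW$. Hence any optimal $\widehat{M}$ must have the form $\widehat{M} = MWW^T$ for some $W$ with orthonormal columns, which already establishes the first part of the fact.

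Next I would expand the objective:
\begin{align*}
\|M - MWW^T\|_F^2 &= \Tr\bigl((M - MWW^T)^T(M - MWW^T)\bigr) \\
&= \|M\|_F^2 - 2\Tr(W^T M^T M W) + \Tr(W^T M^T M W \, W^T W),
\end{align*}
and use $W^T W = I$ to collapse the last two terms. This shows minimizing reconstruction error is equivalent to maximizing $\Tr(W^T M^T M W)$ subject to $W^T W = I$.

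The main (classical) step is the last one: show that $\max_{W^TW = I} \Tr(W^T S W)$ for the symmetric PSD matrix $S = M^T M$ is attained by taking the columns of $W$ to be orthonormal eigenvectors of $S$ corresponding to its top $d$ eigenvalues. I would do this via the spectral decomposition $S = U \Lambda U^T$ with $\Lambda = \mathrm{diag}(\lambda_1 \geq \cdots \geq \lambda_n \geq 0)$: letting $Q = U^T W$, note $Q^T Q = I$, so the rows of $Q$ have squared norms $c_i \in [0,1]$ with $\sum_i c_i = d$, and $\Tr(W^T S W) = \Tr(Q^T \Lambda Q) = \sum_i \lambda_i c_i$. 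Maximizing a nonincreasing linear functional over this capped simplex forces $c_1 = \cdots = c_d = 1$ and $c_{d+1} = \cdots = c_n = 0$, which (together with the orthonormality constraint) is achieved exactly when $\mathrm{col}(W)$ is the span of the top $d$ eigenvectors of $M^T M$. This step is the only one requiring real content beyond bookkeeping; the rest is essentially algebra.
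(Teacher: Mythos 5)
Your proof is correct. Note, though, that the paper itself does not prove Fact~\ref{lem:optPCA}: it is stated as a well-known fact with a citation to a textbook, so there is no in-paper argument to compare against. Your route is the standard one for this classical result: reduce to a fixed row space via $\widehat{M}=XW^T$, use strict convexity in $X$ to force $X=MW$, collapse the objective to $\|M\|_F^2-\Tr(W^TM^TMW)$ using $W^TW=I$, and then solve the trace maximization by passing to $Q=U^TW$ and the capped simplex $\{c\in[0,1]^n:\sum_i c_i=d\}$, where $c_i=(QQ^T)_{ii}\le 1$ because $QQ^T$ is an orthogonal projection. The only point worth tightening is the claim that optimality \emph{forces} $c_1=\cdots=c_d=1$: this is literally true only when $\lambda_d>\lambda_{d+1}$; with ties, the optimal weight may be distributed within the eigenspace of the repeated eigenvalue, but any such optimum still has $\mathrm{col}(W)$ spanned by eigenvectors of the top $d$ eigenvalues (counted with multiplicity), and since $\widehat{M}=MWW^T$ depends only on $\mathrm{col}(W)$, one may choose the columns of $W$ to be those eigenvectors, which is exactly what the fact asserts.
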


The matrix $WW^T\in \R^{n\times n}$ is called a projection matrix.


\section{Fair PCA}

Given the $n$-dimensional data with two subgroups $A$ and $B$, let
$\widehat{M}, \widehat{A}, \widehat{B}$ be optimal rank-$d$ PCA
approximations for $M, A,$ and $B$, respectively.  We introduce our
approach to fair dimensionality reduction by giving two compelling
examples of settings where dimensionality reduction inherently makes a
tradeoff between groups $A$ and $B$. Figure~\ref{fig:ex1} shows a
setting where projecting onto any single dimension either favors $A$
or $B$ (or incurs significant reconstruction error for both), while
either group separately would have a high-fidelity embedding into a
single dimension. This example suggests any projection will
necessarily make a trade off between error on $A$ and error on $B$.

Our second example (shown in Figure~\ref{fig:ex2}) exhibits a setting
where $A$ and $B$ suffer very different reconstruction error when
projected onto one dimension: $A$ has high reconstruction error for
every projection while $B$ has a perfect representation in the
horizontal direction. Thus, asking for a projection which 
minimizes the maximum
reconstruction error for groups $A$ and $B$ might require
incurring additional error for $B$ while not improving the error for
$A$. So, 
minimizing the maximum reconstruction error over $A$ and $B$ fails to
account for the fact that two populations might have wildly different
representation error when embedded into $d$ dimensions. Optimal
solutions to such objective might behave in a counterintuitive way, preferring to
exactly optimize for the group with larger inherent representation
error rather than approximately optimizing for both groups
simultaneously. We find this behaviour undesirable---it requires
sacrifice in quality for one group for no improvement for
the other group.
 \begin{figure}[!tbp]
  \centering
  \subfloat[The best one dimensional PCA projection for group $A$ is vector $(1,0)$ and for group $B$ it is vector~$(0,1)$.]{\includegraphics[width=0.45\textwidth]{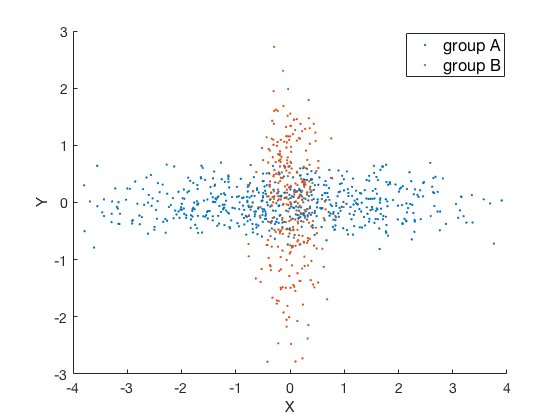}\label{fig:ex1}}
  \hfill
  \subfloat[Group $B$ has a perfect one-dimensional projection. For group $A$, any one-dimensional projection is equally bad.]{\includegraphics[width=0.45\textwidth]{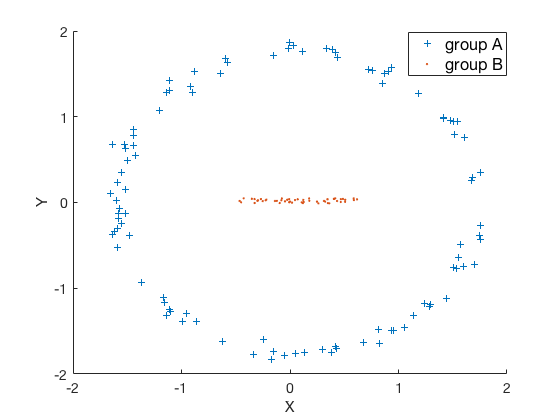}\label{fig:ex2}}
  \caption{}
\end{figure}

\begin{remark}
  We focus on the setting where we ask for a single projection into
  $d$ dimensions rather than two separate projections because using
  two distinct projections (or more generally two models) for
  different populations raises legal and ethical concerns. Learning
  two different projections also faces no inherent tradeoff in
  representing $A$ or $B$ with those projections.\footnote{\citet{lipton}  
    has asked whether equal treatment requires different models for
    two groups.}
  \end{remark}

  We therefore turn to finding a projection which minimizes the
  maximum deviation of each group from its optimal projection. This
  optimization asks that $A$ and $B$ suffer a similar \emph{loss} for
  being projected together into $d$ dimensions compared to their
  individually optimal projections. We now introduce our notation for
  measuring a group's loss when being projected to $Z$ rather than to its
  optimal $d$-dimensional representation:

  \begin{definition}[Reconstruction error] Given two matrices $Y$ and
    $Z$ of the same size, the reconstruction error of $Y$ with
    respect to $Z$ is defined as
$$\text{error}(Y, Z) = \|Y - Z\|_F^2.$$
\end{definition}

\begin{definition}[Reconstruction loss] Given a matrix $Y\in \mathbb{R}^{a\times n}$, let
  $\widehat{Y}\in \mathbb{R}^{a \times n}$ be the optimal rank-$d$ approximation of $Y$. For
  a matrix $Z\in \mathbb{R}^{a\times n}$ with rank at most $d$
we define
        \begin{align*}
        loss(Y,Z) & : = \|Y -Z\|_F^2 - \|Y - \widehat{Y}\|_F^2.
        \end{align*}
      \end{definition}

      Then, the optimization that we study asks to minimize the maximum
      loss suffered by any group. This captures the idea that,
      fixing a feasible solution, the objective will only improve if
      it improves the loss for the group whose current representation
      is worse. Furthermore, considering the reconstruction loss and not the reconstruction error prevents the optimization from incurring error for one subpopulation without improving the error for the other one as described in Figure~\ref{fig:ex2}.

\begin{definition}[Fair PCA]
  \label{def:fairPCA}
  Given $m$ data points in $\mathbb{R}^n$ with subgroups $A$ and $B$, we define the
  problem of finding a fair PCA projection into $d$-dimensions as
  optimizing
 \begin{align}
 \label{minimizationMAIN}
        \min_{U\in \R^{m\times n} \text{, } \rank(U) \leq  d} \max  & \left\{ \frac{1}{|A|} loss(A,U_A),  \frac{1}{|B|}  loss (B, U_B)\right\},
 \end{align}
 where $U_A$ and $U_B$ are matrices with rows corresponding to rows of 
 $U$ for groups $A$ and $B$ respectively.
\end{definition}

This definition does not appear to have a closed-form solution (unlike
vanilla PCA---see Fact \ref{lem:optPCA}). To take a step in characterizing solutions to this
optimization, Theorem~\ref{thm:sameLoss} states that a fair PCA
low dimensional approximation of the data results in the same loss for both groups.

\begin{theorem}
\label{thm:sameLoss}
Let $U$ be a solution to the Fair PCA problem \eqref{minimizationMAIN}, then
\[
\frac{1}{|A|} loss(A, U_A)= \frac{1}{|B|} loss(B, U_B).
\]
\end{theorem}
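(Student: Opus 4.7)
The plan is to argue by contradiction: assume an optimal $U$ satisfies, say, $\frac{1}{|A|} loss(A, U_A) > \frac{1}{|B|} loss(B, U_B)$ (the reverse case is symmetric), and produce a rank-$\leq d$ matrix with strictly smaller max, contradicting optimality of $U$. The first step is a standard reduction: without loss of generality $U = MP$ for an orthogonal projection $P$ onto a $d$-dimensional subspace of $\R^n$, because for any fixed row-space $V$ of $U$ the choice $U_A = AP_V$, $U_B = BP_V$ simultaneously minimizes $\|A-U_A\|_F^2$ and $\|B-U_B\|_F^2$. Fair PCA then becomes a minimax over the Grassmannian $\mathrm{Gr}(d,n)$, and since $\|A-\widehat A\|_F^2$ and $\|B-\widehat B\|_F^2$ are constants, the two quantities being compared are $\frac{1}{|A|}\|A-AP\|_F^2$ and $\frac{1}{|B|}\|B-BP\|_F^2$ up to these constants.

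Next I exploit the strict gap via continuity. Because $P' \mapsto \frac{1}{|B|}loss(B, BP')$ is continuous on $\mathrm{Gr}(d,n)$ and strictly smaller than $\frac{1}{|A|}loss(A, AP)$ at $P'=P$, there is a neighborhood $\mathcal{N}$ of $P$ on which the $B$-term stays below the current max; on $\mathcal{N}$ the minimax objective therefore equals $\frac{1}{|A|}loss(A, AP')$, so optimality of $P$ forces $P$ to be a local minimizer of the map $P'\mapsto loss(A, AP')$ on $\mathrm{Gr}(d,n)$.

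The main technical step is to show that any such local minimizer is actually a global minimizer, i.e.\ has value $0$. I would argue this by a direct rotation: if $\mathrm{range}(P)$ is not a sum of top eigenspaces of $A^TA$, there exist an eigenvector $v_j$ with nonzero component in $\mathrm{range}(P)$ and an eigenvector $v_k$ with nonzero component in $\mathrm{range}(P)^\perp$ such that $\lambda_k>\lambda_j$; the one-parameter family obtained by rotating $v_j$ toward $v_k$ stays in $\mathrm{Gr}(d,n)$ and strictly increases $\Tr(P'A^TA)$ by a quantity proportional to $(\lambda_k-\lambda_j)\sin^2\theta$, hence strictly decreases $loss(A, AP')$. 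Combining, $loss(A, U_A) = 0$, which with $loss(B, U_B) \geq 0$ contradicts the assumed strict inequality. I expect this local-to-global step on the Grassmannian to be the main obstacle, as it requires care with critical points of a non-convex function on a compact manifold; the remaining steps (the WLOG reduction to $U=MP$ and continuity of $loss_B$) are routine.
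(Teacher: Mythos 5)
Your overall route is the same as the paper's: the reduction to $U=MP$ for a rank-$d$ projection is Lemma~\ref{lem:A}, the continuity argument forcing an optimal $W$ to be a local minimizer of $g_A(V)=\|A-AVV^T\|_F^2$ is exactly the paper's proof of Theorem~\ref{thm:sameLoss}, and your ``local minimizers are global'' claim is Lemma~\ref{lem:C}, which the paper also proves by an explicit rotation of subspaces. So the only substantive question is whether your rotation argument for that lemma goes through, and as stated it does not.

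Let $P_\theta=R_\theta P R_\theta^T$ where $R_\theta$ rotates in the plane $\mathrm{span}\{v_j,v_k\}$. A direct computation gives
\begin{equation*}
\Tr(P_\theta A^TA)-\Tr(PA^TA)=(\lambda_k-\lambda_j)\Bigl[\sin^2\theta\,\bigl(\|Pv_j\|^2-\|Pv_k\|^2\bigr)+\sin(2\theta)\,\langle Pv_j,Pv_k\rangle\Bigr].
\end{equation*}
Your hypotheses only give $\|Pv_j\|>0$ and $\|Pv_k\|<1$, so if $\langle Pv_j,Pv_k\rangle=0$ and $\|Pv_j\|^2\le\|Pv_k\|^2$ the change is $\le 0$ for all small $\theta$, and the claimed increase proportional to $(\lambda_k-\lambda_j)\sin^2\theta$ fails. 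For instance, with $d=2$ and $\mathrm{range}(P)=\mathrm{span}\{\sqrt{0.9}\,v_1+\sqrt{0.1}\,v_4,\ \sqrt{0.9}\,v_2+\sqrt{0.1}\,v_3\}$, the pair $(j,k)=(4,2)$ satisfies your conditions, yet the rotation strictly \emph{decreases} $\Tr(P_\theta A^TA)$. That subspace is of course not a local minimum of $g_A$ (other directions improve it), but your argument picks an arbitrary admissible pair, so it does not establish local-to-global. The repair is exactly the extra care taken in Lemma~\ref{lem:C}: either first argue that a local minimizer must be a critical point, hence an invariant subspace spanned by eigenvectors, in which case one can take $v_j$ inside and $v_k$ outside with $\lambda_k>\lambda_j$, so $\|Pv_j\|=1$, $\|Pv_k\|=0$, cross term zero, and your second-order gain is genuine; or, as the paper does, pick the smallest index $q$ with $\|Av_q\|^2$ strictly exceeding $\|Au\|^2$ for all unit $u$ in the current subspace, split $v_q=\sqrt{1-a^2}\,z_1+a z_2$ with $z_1\in U$, $z_2\perp U$, and rotate $z_1$ (or, if $z_1=0$, a basis vector of $U$) toward $v_q$; there the cross term equals $\lambda_q\sqrt{1-a^2}\ge 0$ and the strict gap $\lambda_q-\|Az_1\|^2>0$ yields a strict first- or second-order improvement. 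A further minor point: with repeated eigenvalues ($\lambda_d=\lambda_{d+1}$) an optimal subspace need not be a sum of eigenspaces, and then no pair with $\lambda_k>\lambda_j$ of your type may exist; this is harmless since such subspaces already have zero loss, but it shows the pair selection must be tied to a strict spectral comparison as in the paper.
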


Before proving Theorem~\ref{thm:sameLoss}, we need to state some 
building blocks of the proof, Lemmas~\ref{lem:A}, \ref{lem:B}, and \ref{lem:C}. For
the proofs of the lemmas please refer to the appendix~\ref{app:proof}.

\begin{lemma}
\label{lem:A}
Given a matrix $U\in \mathbb{R}^{m\times n}$ such that
$\rank(U)\leq d$ , let
$f(U)=\max \left\{ \frac{1}{|A|} loss(A,U_A), \frac{1}{|B|} loss (B,
U_B)\right\}$. Let $\{v_1,\ldots, v_d\} \subset \R^n $ be an orthonormal
basis of the row space of $U$ and
$V := [v_1, \ldots , v_d] \in \mathbb{R}^{n \times d}$. Then
  $$
f\left(\left[ \begin{array}{c} A \\ B \end{array} \right] VV^T\right) = f\left(\left[ \begin{array}{c} AVV^T \\ BVV^T \end{array} \right]\right) \leq f(U).
  $$
\end{lemma}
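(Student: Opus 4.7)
The equality on the left is a one-line check: by block matrix multiplication, $\left[\begin{smallmatrix} A \\ B \end{smallmatrix}\right] VV^T = \left[\begin{smallmatrix} AVV^T \\ BVV^T \end{smallmatrix}\right]$, so the two arguments of $f$ are literally the same matrix. The content is the inequality, and my plan is to reduce it, via the definition of $f$ and of $loss$, to the per-group statement $\|X-XVV^T\|_F^2 \leq \|X-U_X\|_F^2$ for each $X\in\{A,B\}$. Note that the $\|X-\widehat{X}\|_F^2$ term in $loss(X,\cdot)$ does not depend on the second argument, so it cancels out when comparing $loss(X,XVV^T)$ to $loss(X,U_X)$; therefore the reduction is clean.

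The heart of the argument is the standard geometric fact that $VV^T$, applied on the right to a row vector, is the orthogonal projection onto $\mathrm{colspace}(V)$, and by construction $\mathrm{colspace}(V)$ equals $\mathrm{rowspace}(U)$. I would make this precise by noting $V^TV=I$ (since the $v_i$'s are orthonormal), so $(VV^T)^2 = VV^T$ and $(VV^T)^T = VV^T$, i.e., $VV^T$ is the orthogonal projector onto $\mathrm{span}(v_1,\ldots,v_d)=\mathrm{rowspace}(U)$. In particular, for any row vector $x\in\R^n$, the vector $xVV^T$ is the closest point in $\mathrm{rowspace}(U)$ to $x$.

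Now I apply this row-by-row. Fix a group $X\in\{A,B\}$ and a row index $i$. The row $(U_X)_i$ is a row of $U$, hence lies in $\mathrm{rowspace}(U)$. The vector $(XVV^T)_i = X_i VV^T$ is the orthogonal projection of $X_i$ onto $\mathrm{rowspace}(U)$, so $X_i - X_iVV^T$ is orthogonal to $\mathrm{rowspace}(U)$, while $X_iVV^T - (U_X)_i$ lies in $\mathrm{rowspace}(U)$. The Pythagorean identity then gives
\[
\|X_i - (U_X)_i\|^2 = \|X_i - X_iVV^T\|^2 + \|X_iVV^T - (U_X)_i\|^2 \geq \|X_i - X_iVV^T\|^2.
\]
Summing over the rows of $X$ yields $\|X-XVV^T\|_F^2 \leq \|X-U_X\|_F^2$, and subtracting $\|X-\widehat{X}\|_F^2$ from both sides gives $loss(X,XVV^T)\leq loss(X,U_X)$.

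Applying this for both $X=A$ and $X=B$, dividing by $|A|$ and $|B|$ respectively, and taking the max preserves the inequality, so $f\!\left(\left[\begin{smallmatrix} AVV^T \\ BVV^T \end{smallmatrix}\right]\right)\leq f(U)$. There is no real obstacle here; the only care needed is bookkeeping: verifying that $\mathrm{colspace}(V)$ coincides with $\mathrm{rowspace}(U)$ (which follows from $\{v_1,\ldots,v_d\}$ being a basis of the latter), and that $VV^T$ is the associated orthogonal projector (which uses $V^TV=I$). Everything else is the row-wise Pythagorean argument above.
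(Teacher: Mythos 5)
Your proposal is correct and follows essentially the same route as the paper: both arguments reduce the claim row-by-row to showing that $X_iVV^T$ is the closest point of $\mathrm{rowspace}(U)$ to $X_i$, the only difference being that you invoke the Pythagorean identity for the orthogonal projector $VV^T$ while the paper writes $(U_X)_i = c_iV^T$ and minimizes the resulting quadratic in $c_i$ explicitly. The cancellation of the $\|X-\widehat{X}\|_F^2$ term and the final max/scaling step are handled identically in both.
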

The next lemma presents some equalities that we will use frequently in the proofs.
\begin{lemma}
\label{lem:B}
        Given a matrix $V=[v_1,\ldots, v_d]\in \mathbb{R}^{n\times d}$ with orthonormal columns, we have:
        \begin{itemize}
                \item[$\diamond$] $loss(A, AVV^T) = \|\widehat{A}\|_F^2 - \sum_{i=1}^{d} \|Av_i\|^2 = \|\widehat{A}\|_F^2 - \langle A^TA, VV^T\rangle$
                \item[$\diamond$] $\|A-AVV^T\|_F^2 =  \|A\|_F^2 - \|AV\|_F^2 = \|A\|_F^2 - \sum_{i=1}^d \|Av_i\|^2$
        \end{itemize}
\end{lemma}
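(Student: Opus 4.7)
The lemma is a pair of chained equalities, and I would prove the second bullet first (a self-contained Frobenius-norm computation) and then bootstrap it to get the first bullet by applying the same identity to the optimal rank-$d$ basis supplied by Fact~\ref{lem:optPCA}. The entire argument is trace algebra combined with the orthonormality relation $V^TV = I_d$.

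For the second bullet, I would expand
\[
\|A - AVV^T\|_F^2 = \|A\|_F^2 - 2\langle A, AVV^T\rangle + \|AVV^T\|_F^2.
\]
Trace cyclicity gives $\langle A, AVV^T\rangle = \Tr(A^T A VV^T) = \Tr(V^T A^T A V)$, and the same manipulation together with $V^TV = I_d$ gives $\|AVV^T\|_F^2 = \Tr(VV^T A^T A VV^T) = \Tr(V^T A^T A V)$. So both middle terms collapse to $\|AV\|_F^2$, yielding $\|A - AVV^T\|_F^2 = \|A\|_F^2 - \|AV\|_F^2$. Since the columns of $AV$ are exactly $Av_1,\ldots,Av_d$, the identity $\|AV\|_F^2 = \sum_{i=1}^d \|Av_i\|^2$ is immediate.

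For the first bullet, by Fact~\ref{lem:optPCA} we can write $\widehat{A} = AWW^T$ for some $W \in \R^{n\times d}$ with $W^TW = I_d$. Applying the second-bullet identity to $W$ in place of $V$ gives $\|A - \widehat{A}\|_F^2 = \|A\|_F^2 - \|AW\|_F^2$, and the same trace argument shows $\|\widehat{A}\|_F^2 = \|AWW^T\|_F^2 = \|AW\|_F^2$. Substituting into the definition of $loss$,
\[
loss(A, AVV^T) = \|A - AVV^T\|_F^2 - \|A - \widehat{A}\|_F^2 = \|AW\|_F^2 - \|AV\|_F^2 = \|\widehat{A}\|_F^2 - \sum_{i=1}^d \|Av_i\|^2.
\]
The inner-product form is then $\sum_{i=1}^d \|Av_i\|^2 = \Tr(V^T A^T A V) = \Tr(A^T A VV^T) = \langle A^T A, VV^T\rangle$.

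There is no real obstacle: this is a bookkeeping exercise in Frobenius norms under projection onto an orthonormal set. The only point requiring care is that $VV^T$ is an orthogonal projector of rank $d$ rather than the identity, so every simplification must route through $V^TV = I_d$ (via trace cyclicity) rather than through $VV^T$; once that pattern is set, both bullets fall out mechanically.
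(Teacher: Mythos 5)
Your proposal is correct and follows essentially the same route as the paper's proof: both rely on Fact~\ref{lem:optPCA} to write $\widehat{A}=AWW^T$, expand the Frobenius norms using $V^TV=I$ (and $W^TW=I$), and reduce everything to $\|AV\|_F^2=\sum_i\|Av_i\|^2=\langle A^TA,VV^T\rangle$. The only difference is presentational---you carry out the cancellation via trace cyclicity at the matrix level, while the paper expands row by row---so the two arguments are interchangeable.
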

Let the function $g_A=g_A(U)$ measure the reconstruction error of a fixed matrix $A$ with respect to its orthogonal projection to the input
subspace $U$. The next lemma shows that the value of the function~$g_A$ at any local minimum is the same.
\begin{lemma}
\label{lem:C}
Given a matrix $A\in \mathbb{R}^{a\times n}$, and a $d$-dimensional subspace $U$,
let the function $g_A=g_A(U)$ denote the reconstruction error of matrix
$A$ with respect to its orthogonal projection to the subspace~$U$, that is  $g_A(U) := \|A-AUU^T\|_F^2$, where by abuse of notation we use
$U$ inside the norm to denote the matrix which has an orthonormal
basis of the subspace $U$ as its columns.
The value of the function~$g_A$ at any local minimum is the same. 

\end{lemma}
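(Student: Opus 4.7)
The plan is to reduce the claim to a statement about a well-studied quadratic optimization. By Lemma~\ref{lem:B}, I can write $g_A(U) = \|A\|_F^2 - \langle A^TA, UU^T\rangle$, so local minima of $g_A$ correspond bijectively to local maxima of $h(U) := \langle C, UU^T\rangle$, where $C := A^TA$ is symmetric positive semidefinite. Since the Grassmannian of $d$-dimensional subspaces of $\R^n$ is smooth and compact and $h$ descends to a smooth function on it, it suffices to show that every local maximum of $h$ attains the same value, namely the sum of the top $d$ eigenvalues of $C$.

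First, I would characterize the critical points by working on the Stiefel manifold $\{U \in \R^{n\times d} : U^TU = I\}$. Writing the first-order optimality condition with Lagrange multipliers for the orthonormality constraints, $U$ is critical for $h$ iff $CU = U\Lambda$ for some symmetric $d\times d$ matrix $\Lambda$; equivalently, the column span of $U$ is a $C$-invariant subspace. Diagonalizing $\Lambda$ by an orthogonal change of basis within the subspace (which preserves both the span and $h(U)$), I may assume the columns $u_1,\ldots,u_d$ of $U$ are orthonormal eigenvectors of $C$ with eigenvalues $\mu_1,\ldots,\mu_d$ drawn from the spectrum of $C$ with multiplicity, so that $h(U) = \sum_{k=1}^d \mu_k$.

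Next, I would argue by contradiction that any local maximum must select the top $d$ eigenvalues of $C$ (counted with multiplicity). Suppose $U$ is a local maximum as above but the multiset $\{\mu_k\}$ is not the top-$d$ multiset. Then there is an eigenvector $v$ of $C$ orthogonal to the span of $U$ with eigenvalue $\lambda > \mu_k$ for some $k$. Consider the smooth curve $U(t)$ obtained by replacing the $k$-th column $u_k$ by $\cos(t)\,u_k + \sin(t)\,v$ and leaving the others fixed; since $v$ is orthogonal to every $u_i$ including $u_k$, the columns remain orthonormal. Using $u_k^T C v = \lambda\,u_k^T v = 0$, a direct computation gives
\[
h(U(t)) = \sum_{i\ne k}\mu_i + \cos^2(t)\,\mu_k + \sin^2(t)\,\lambda = h(U(0)) + (\lambda - \mu_k)\sin^2(t),
\]
which strictly exceeds $h(U(0))$ for small $t>0$, contradicting local maximality.

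Hence every local maximum of $h$ equals $\sum_{i=1}^d \lambda_i(C)$, and every local minimum of $g_A$ equals $\|A\|_F^2 - \sum_{i=1}^d \lambda_i(A^TA)$, which is the single claimed value. The main delicacy will be handling repeated eigenvalues in the critical-point characterization, where individual eigenvectors are not unique; working with $C$-invariant subspaces rather than with particular eigenbases sidesteps this, and the swap perturbation above only relies on orthogonality relations that persist regardless of how bases within each eigenspace are chosen.
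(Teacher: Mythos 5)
Your proof is correct, but it takes a genuinely different route from the paper's. You first pass to $h(U)=\langle A^TA, UU^T\rangle$ and invoke first-order optimality on the Stiefel manifold to conclude that any local maximum spans a $C$-invariant subspace, hence (after diagonalizing the multiplier) is spanned by $d$ orthonormal eigenvectors of $C=A^TA$; the only remaining work is the clean rotation $u_k \mapsto \cos(t)u_k+\sin(t)v$ showing that a non-dominant eigenvalue selection cannot be locally maximal. The one step you state without proof---that a non-top selection forces the existence of an eigenvector $v\perp \mathrm{span}(U)$ with eigenvalue exceeding some $\mu_k$---does hold, because $U^\perp$ is also $C$-invariant so the spectrum splits between $U$ and $U^\perp$, and your closing remark about invariant subspaces correctly handles repeated eigenvalues. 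The paper instead never restricts to critical points: it takes an \emph{arbitrary} subspace $U$ with $g_A(U)>g_A(V^*)$, writes the top eigenvector as $v_1=\sqrt{1-a^2}\,z_1+a z_2$ with $z_1\in U$, $z_2\perp U$, and by explicit computation in two cases ($z_1=0$ and $z_1\neq 0$) produces a nearby subspace with strictly smaller $g_A$, yielding a monotone improving path toward $V^*$. Your approach buys brevity and transparency by leaning on the standard Lagrange/critical-point characterization (essentially the classical fact that the trace functional on the Grassmannian has no spurious local maxima), at the cost of invoking smooth-manifold optimality conditions; the paper's argument is more elementary and self-contained, avoids any appeal to the Stiefel/Grassmannian structure, and proves the slightly stronger statement that every suboptimal subspace admits a strictly improving perturbation, though it requires a more delicate case analysis (and a somewhat informal ``without loss of generality $q=1$'' reduction) that your criticality step renders unnecessary.
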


\begin{proofof}{Theorem~\ref{thm:sameLoss}}

  Consider the functions $g_A$ and $g_B$ defined in Lemma~\ref{lem:C}.
  It follows from Lemma~\ref{lem:A} and Lemma~\ref{lem:B} that for
  $V\in \mathbb{R}^{n\times d}$ with  $V^TV=I$ we have
        \begin{align}
        \label{eq:lossEq}
        loss(A, AVV^T) = \|\widehat{A}\|_F^2 - \|A\|_F^2 + g_A(V), \\
        loss(B, BVV^T) = \|\widehat{B}\|_F^2 - \|B\|_F^2 + g_B(V).  \nonumber       
        \end{align}

Therefore, the Fair PCA problem is equivalent to
\begin{align*}
  \min_{V \in \mathbb{R}^{n\times d}, V^TV=I} \, f(V):=\max \left\{ \frac{1}{|A|} loss(A, AVV^T)
  , \frac{1}{|B|} loss(B, BVV^T) \right\}.
\end{align*}

We proceed to prove the claim by contradiction.
Let $W$ be a global minimum of $f$ and assume that
\begin{equation}
\label{eq:bigger}
\frac{1}{|A|}loss(A, AWW^T) > \frac{1}{|B|}loss(B, BWW^T).
\end{equation}

Hence, since $loss$ is continuous, for any matrix $W_\epsilon$ with $W_\epsilon ^T W_\epsilon =I$ in a small enough neighborhood  of $W$, $f(W_\epsilon)= \frac{1}{|A|}loss(A, AW_\epsilon W_\epsilon ^T)$. Since $W$ is a global minimum of $f$, it is a local minimum of $\frac{1}{|A|} loss(A, AWW^T)$ or equivalently a local minimum of 
$g_A$ because of \eqref{eq:lossEq}.

Let $\{v_1, \ldots, v_n\}$ be an orthonormal basis of the eigenvectors
of $A^TA$ corresponding to eigenvalues
$\lambda_1 \geq \lambda_2 \geq \ldots \geq \lambda_n$. Let $V^*$ be
the subspace spanned by $\{v_1,\ldots,v_d\}$. Note that
$loss(A, A{V^*}^TV^*)=0$. Since the loss is always non-negative for
both $A$ and $B$, \eqref{eq:bigger} implies that $loss(A, AWW^T) > 0$.
Therefore, $W\neq V^*$ and $g_A(V^*) < g_A(W)$. By Lemma~\ref{lem:C}, this is in contradiction with $V^*$ being a global minimum and $W$ being a local minimum of $g_A$.
\end{proofof}

\section{Algorithm and analysis}

In this section, we present a polynomial-time algorithm for solving the fair PCA problem. Our algorithm outputs a matrix of rank at most $d+1$ and guarantees that it achieves the fair PCA objective value equal to the optimal
$d$-dimensional fair PCA value.  The algorithm
has two steps: first, relax fair PCA to a semidefinite optimization
problem and solve the SDP; second, solve an LP designed to reduce the rank of said solution. We
argue using properties of extreme point solutions that the solution must satisfy a
number of constraints of the LP with equality, and argue directly that this
implies the solution must lie in $d+1$ or fewer dimensions. We refer the reader to ~\cite{lau2011iterative} for basics and applications of this technique in approximation algorithms.

\begin{theorem}
\label{thm:main}
There is a polynomial-time algorithm that outputs an approximation matrix of the data such that it is either of rank $d$ and is an optimal solution to the fair PCA problem OR it is of rank $d+1$, has equal losses for the two populations and achieves the optimal fair PCA objective value for dimension $d$.
\end{theorem}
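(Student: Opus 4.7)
The plan is to relax Fair PCA to a semidefinite program over a projection-like matrix $P$, solve it in polynomial time, and then round to low rank via a linear-programming extreme-point argument.

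By Lemma~\ref{lem:B}, for any $V\in\R^{n\times d}$ with $V^TV=I$ we have $loss(A,AVV^T)=\|\widehat{A}\|_F^2-\langle A^TA,VV^T\rangle$ and similarly for $B$. Writing $P=VV^T$, both losses are linear in $P$. Relaxing the nonconvex constraint ``$P$ is a rank-$d$ projection'' to the convex set $\{0\preceq P\preceq I,\;\Tr(P)=d\}$ (its convex hull) produces the SDP
\begin{align*}
\min\; t\;\;\text{s.t.}\;\; & \|\widehat{A}\|_F^2-\langle A^TA,P\rangle\leq |A|t,\\
& \|\widehat{B}\|_F^2-\langle B^TB,P\rangle\leq |B|t,\\
& \Tr(P)=d,\; 0\preceq P\preceq I.
\end{align*}
This is a relaxation, so its value $t^*$ satisfies $t^*\leq \mathrm{OPT}_d$, and interior-point methods solve it in polynomial time.

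Let $P^*=\sum_{i=1}^n \lambda_i v_iv_i^T$ be an SDP optimum in eigendecomposition. Freezing the orthonormal $v_i$'s and varying only the eigenvalues yields a linear program in variables $(\lambda_1,\ldots,\lambda_n,t)\in\R^{n+1}$ with constraints $\sum_i\lambda_i\|Av_i\|^2\geq \|\widehat{A}\|_F^2-|A|t$, $\sum_i\lambda_i\|Bv_i\|^2\geq \|\widehat{B}\|_F^2-|B|t$, $\sum_i\lambda_i=d$, and $\lambda_i\in[0,1]$. The LP and SDP have the same optimal value at $\lambda_i=\lambda_i^{SDP}$. Move to an extreme point $\lambda^*$ at this optimum.

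The core step is a standard extreme-point count. An extreme point in $\R^{n+1}$ requires $n+1$ linearly independent tight constraints. The trace equation is always tight (contributing $1$); at most $2$ more can come from the $A,B$ inequalities; hence at least $n-2$ of the box constraints $\lambda_i\in\{0,1\}$ must be tight, so at most two $\lambda_i^*$ are fractional. A parity argument then shows the number of fractional coordinates is either $0$ or exactly $2$: the integral coordinates contribute an integer to the sum $d$, so the fractional ones must sum to an integer; since each lies in $(0,1)$, a single fractional coordinate is impossible.

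This dichotomy gives the two cases. If $\lambda^*$ is integral, exactly $d$ coordinates equal $1$, $P^*=VV^T$ is a genuine rank-$d$ projection, and the output $MVV^T$ is a rank-$d$ Fair PCA feasible solution of objective $t^*\leq \mathrm{OPT}_d$, hence optimal. If $\lambda^*$ has exactly two fractional coordinates (summing to $1$, while $d-1$ others equal $1$), then $P^*$ has rank $d+1$; to accumulate $n+1$ tight constraints, both the $A$ and $B$ inequalities must be tight, yielding the equal-loss property
\[
\tfrac{1}{|A|}\bigl(\|\widehat{A}\|_F^2-\textstyle\sum_i\lambda_i^*\|Av_i\|^2\bigr)=\tfrac{1}{|B|}\bigl(\|\widehat{B}\|_F^2-\sum_i\lambda_i^*\|Bv_i\|^2\bigr)=t^*.
\]
The algorithm outputs the rank-$(d+1)$ approximation of $M$ determined by $P^*$, whose Fair PCA objective is at most $t^*\leq \mathrm{OPT}_d$, as required.

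The main obstacle, in my view, is the extreme-point combinatorics—especially the parity argument that forces the rank to jump from $d$ directly to $d+1$ and, in the rank-$(d+1)$ case, activates both population inequalities to produce equal losses. The SDP formulation and relaxation are standard; everything else is bookkeeping over which constraints can be simultaneously tight at a vertex.
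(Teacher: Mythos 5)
Your overall route is the same as the paper's: relax to an SDP over $\{0\preceq P\preceq I,\ \Tr(P)\le d\}$ using the linearization $loss(A,AVV^T)=\|\widehat{A}\|_F^2-\langle A^TA,VV^T\rangle$, fix the eigenbasis of the SDP optimum, pass to an extreme point of the resulting LP in the eigenvalues, and count tight constraints to conclude that at most two $\lambda_i$ are fractional (your parity argument with $\Tr(P)=d$ is a clean variant of the paper's counting, and the observation that both population constraints must then be tight matches the paper). Case 1 (all integral) is handled correctly.

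There is, however, a genuine gap in Case 2, at the step you dismiss as ``bookkeeping.'' The identity $loss(A,AP)=\|\widehat{A}\|_F^2-\langle A^TA,P\rangle$ is valid only when $P$ is an orthogonal projection ($P^2=P$). When two eigenvalues are fractional, $P^*=\sum_i\lambda_i^* v_iv_i^T$ is not a projection, and the true loss of the output $AP^*$ is $\|\widehat{A}\|_F^2-\sum_i(2\lambda_i^*-{\lambda_i^*}^2)\|Av_i\|^2$, not $\|\widehat{A}\|_F^2-\sum_i\lambda_i^*\|Av_i\|^2$. So the tightness of the two LP constraints gives equality of the \emph{linearized} losses, not of the actual losses of whatever matrix you output; likewise ``the rank-$(d{+}1)$ approximation of $M$ determined by $P^*$'' is left unspecified, and the natural candidates fail the equal-loss claim: using the LP eigenvalues directly, or projecting fully onto the span of the $d{+}1$ eigenvectors, gives each group loss at most $t^*$ but generically \emph{unequal} losses, since the correction terms depend on $\|Av_i\|^2$ versus $\|Bv_i\|^2$ on the two fractional directions. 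The paper closes exactly this gap by rescaling the fractional eigenvalues to $\lambda_j^{\mathrm{new}}=1-\sqrt{1-\bar\lambda_j}$, so that $2\lambda_j^{\mathrm{new}}-(\lambda_j^{\mathrm{new}})^2=\bar\lambda_j$ and the actual loss of the resulting rank-$(d{+}1)$ affine map (equivalently, the embedding $x\mapsto(x\cdot u_1,\ldots,x\cdot u_{d-1},\sqrt{\lambda_d^{\mathrm{new}}}\,x\cdot u_d,\sqrt{\lambda_{d+1}^{\mathrm{new}}}\,x\cdot u_{d+1})$) equals the LP value $z^*$ for \emph{both} groups, which is what simultaneously delivers rank $d+1$, equal losses, and objective at most the $d$-dimensional optimum. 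Your proof needs this (or an equivalent) rounding step; without it the equal-loss part of the theorem is not established.
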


\begin{algorithm} 
\caption{Fair PCA} 
\label{alg:fairPCA} 
    \SetKwInOut{Input}{Input}
    \SetKwInOut{Output}{Output}
    \Input{\(A\in\R^{m_1\times n} ,B\in\R^{m_2\times n} \), \(d < n,m=m_1+m_2 \)}
    \Output{\({U \in \mathbb{R}^{m \times n}, \rank(U)\leq d+1}\)}
    Find optimal rank-\(d\) approximations of \(A,B\) as \(\widehat{A},\widehat{B}\) (e.g. by Singular Value Decomposition).   

Let ($\hat{P},\hat{z})$ be a solution to the SDP:\begin{align}
\min\nolimits_{P\in \mathbb{R}^{n\times n}\text{, }z\in\R} &\ \ \ z \label{sdp}\\
\text { s.t. } & z \geq \frac{1}{m_1} \cdot \left(  \|\widehat{A}\|_F^2 -\langle A^\top A,P\rangle \right) \nonumber \\
& z\geq \frac{1}{m_2} \cdot \left(  \|\widehat{B}\|_F^2 - \langle B^\top B,P\rangle \right) \nonumber \\
&  \text{Tr}(P) \leq d , \ 0 \preceq P \preceq I \nonumber
\end{align}  \\
    Apply Singular Value Decomposition to \(\hat{P}\), \(
\hat{P} = \sum_{j=1}^n \hat{\lambda}_ju_j u_j^T\).
     \\
     Find an extreme solution \((\bar{\lambda},z^*)\) of the LP: \begin{align}
     \min_{\lambda  \in \mathbb{R}^{n }\text{, }z\in\R} &\ \ \ z \label{eq:two} \\
\text { s.t. }  z &\geq\frac{1}{m_1}  \Big(   \|\widehat{A}\|_F^2 - \langle A^\top A,\sum_{j=1}^n \lambda_j u_j u_j^T\rangle \Big)=\frac{1}{m_1}  \Big(  \|\widehat{A}\|_F^2 - \sum_{j=1}^n \lambda_j \cdot \langle A^\top A,u_j u_j^T\rangle\Big) \label{eq:con1}\\
z &\geq\frac{1}{m_2}  \Big(   \|\widehat{B}\|_F^2 - \langle B^\top B,\sum_{j=1}^n \lambda_j u_j u_j^T\rangle\Big)= \frac{1}{m_2}  \Big( \|\widehat{B}\|_F^2 - \sum_{j=1}^n \lambda_j \cdot \langle B^\top B,u_j u_j^T \rangle\Big) \\
  & \textstyle{\sum_{i=1}^n\lambda_i \leq d} \label{eq:three}\\
  & 0 \leq \lambda_i \leq 1 \label{eq:four}
  \end{align}
  
     Set  $
P^*= \sum_{j=1}^n \lambda^*_j u_j u_j^T
$ where \(\lambda_j^* = 1-\sqrt{1-\bar{\lambda_j}}\). \\
    \Return
\(U =\left[ \begin{array}{c} A \\ B \end{array} \right] P^* \)

\end{algorithm}
\vspace{-2mm}

\begin{proofof}{Theorem~\ref{thm:main}}
The algorithm to prove Theorem  \ref{thm:main} is presented in Algorithm~\ref{alg:fairPCA}. Using Lemma~\ref{lem:B}, we can write the semi-definite relaxation of the fair PCA objective (Def.~\ref{def:fairPCA}) as SDP \eqref{sdp}. This semi-definite program can be solved in polynomial time.
The system of constraints~\eqref{eq:two}-\eqref{eq:four} is a linear program in the variables $\lambda_i$ (with the $u_i$'s fixed). Therefore, an extreme point solution \((\bar{\lambda},z^*)\) is defined by $n+1$ equalities, at most  three of which can be constraints in \eqref{eq:con1}-\eqref{eq:three} and the rest (at least $n-2$ of them) must be from the $\bar{\lambda}_i = 0$ or $\bar{\lambda}_i = 1$ for $i\in[n]$. Given the upper bound of $d$ on the sum of the $\bar{\lambda}_i$'s, this implies that at least $d-1$ of them are equal to $1$, i.e., at most two are fractional and add up to $1$.
\vspace{-3 mm}
\paragraph{Case 1.} All the eigenvalues are integral. Therefore, there are
  $d$ eigenvalues equal to $1$. This results in orthogonal projection
  to $d$-dimension.
\paragraph{Case 2.} $n-2$ of eigenvalues are in $\{0,1\}$ and two
  eigenvalues $ 0 < \bar{\lambda}_d, \bar{\lambda}_{d+1} < 1$. Since
  we have $n+1$ tight constraints, this means that both of the first two
  constraints are tight. Therefore
  \vspace{-2mm}
  $$
  \frac{1}{|A|}(\|\widehat{A}\|_F^2 - \sum_{i=1}^{n}\bar{\lambda}_i
  \langle {A^TA,u_i u_i^T} \rangle ) =
  \frac{1}{|B|}(\|\widehat{B}\|_F^2 - \sum_{i=1}^{n}\bar{\lambda}_i
  \langle {B^TB,u_i u_i^T} \rangle)=z^*\leq \hat{z},
  $$
  where the inequality is by observing that \((\hat{\lambda},\hat{z})\) is a feasible solution.  Note that the loss of group \(A\) given by an affine projection \(P^*= \sum_{j=1}^n \lambda^* u_j u_j^T\) is
\begin{align*}
loss(A,AP^*)  = \|{A-AP^*}\|_F^2 - \|A- \widehat{A}\|^2_F = \text{Tr}\left((A-AP^*)(A-AP^*)^\top \right) - \|{A}\|_F^2 +\| \widehat{A}\|_F^2 \\
= \text{Tr}\left((A-AP^*)(A-AP^*)^\top \right) - \|{A}\|_F^2 +\| \widehat{A}\|_F^2 = \| \widehat{A}\|_F^2 - 2\text{Tr}(AP^*A^\top)+\text{Tr}(A{P^*}^2A^\top)\\
=\|\widehat{A}\|_F^2 - \sum_{i=1}^{n}(2\lambda_i^* -{\lambda_i^*}^2)\langle {A^TA,u_i u_i^T} \rangle =\|\widehat{A}\|_F^2 - \sum_{i=1}^{n}\bar{\lambda}\langle {A^TA,u_i u_i^T} \rangle  , \\
\end{align*}
where the last inequality is by the choice of \(\lambda_j^* = 1-\sqrt{1-\bar{\lambda_j}}\). The same equality holds true for group \(B\). Therefore, \(P^*\) gives the equal loss of \(z^*\leq\hat{z}\) for two groups.
The embedding
$x \rightarrow (x\cdot u_1, \ldots, x\cdot u_{d-1}, \sqrt{\lambda_d^*}\, x\cdot u_d, \sqrt{\lambda_{d+1}^*} \, x \cdot u_{d+1})
$
corresponds to the affine projection of any point (row) of $A,B$ defined by the solution $P^*$.

In both cases, the objective value is at most that of the original fairness objective.
\end{proofof}

The result of Theorem  \ref{thm:main} in two groups generalizes to more than two groups as follows.
Given $m$ data points in $\mathbb{R}^n$ with \(k\) subgroups $A_1,A_2,\ldots,A_k$, and \(d\leq n \) the desired number of dimensions of projected space, we generalize Definition \ref{def:fairPCA} of fair PCA
  problem  as
  optimizing
 \begin{align}
 \label{minimization}
        \min_{U\in \R^{m\times n} \text{, } \rank(U) \leq  d} \ \max_{i\in\{1,\ldots,k\}}  & \left\{ \frac{1}{|A_i|} loss(A_i,U_{A_i}))\right\},
 \end{align}
 where $U_{A_i}$ are  matrices with rows corresponding to rows of
 $U$ for groups $A_i$.

\begin{theorem}
\label{thm:k-group}
There is a polynomial-time algorithm to find a projection such that it is of dimension at most $d+k-1$ and achieves  the optimal fairness objective value for dimension $d$.
\end{theorem}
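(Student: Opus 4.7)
\begin{proofsketch}
The plan is to mimic the algorithm and analysis for the two-group case (Theorem~\ref{thm:main}) with one loss constraint per group. First, I would generalize the SDP relaxation \eqref{sdp} by replacing the two loss-bound inequalities with $k$ of them: for every $i \in [k]$, add the constraint
\[
z \geq \frac{1}{|A_i|}\left(\|\widehat{A_i}\|_F^2 - \langle A_i^\top A_i, P\rangle\right),
\]
keeping $\Tr(P)\leq d$ and $0 \preceq P \preceq I$. Using Lemma~\ref{lem:B} exactly as before, this is a valid relaxation of the generalized fair PCA objective \eqref{minimization}, its value lower bounds the optimum in $d$ dimensions, and it can be solved in polynomial time.

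Next, I would diagonalize the SDP solution $\hat{P} = \sum_j \hat{\lambda}_j u_j u_j^\top$, fix the eigenvectors $u_j$, and solve an LP in the eigenvalues $\lambda_i$ analogous to \eqref{eq:two}--\eqref{eq:four}, but with $k$ loss constraints rather than two. This LP has $n+1$ variables, so any extreme point solution $(\bar\lambda, z^*)$ is defined by $n+1$ linearly independent tight constraints. At most $k+1$ of these can come from the $k$ loss constraints and the single trace constraint, so at least $n-k$ of them must be box constraints $\bar\lambda_i\in\{0,1\}$. Writing $r$ for the number of fractional $\bar\lambda_i$ and $j$ for the number of $\bar\lambda_i$ equal to $1$, this gives $r \leq k$.

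The main (and only delicate) step is the rank count. Since $\sum_i \bar\lambda_i \leq d$ and the fractional sum $s := \sum_{\bar\lambda_i \in (0,1)} \bar\lambda_i$ satisfies $0 < s$ whenever $r \geq 1$, the integer $j$ obeys $j \leq d-1$ when $r \geq 1$, and $j \leq d$ when $r=0$. In either case the support size is
\[
j + r \;\leq\; \max\{d,\ (d-1)+k\} \;=\; d+k-1,
\]
since $k\geq 1$. (The two-group case $k=2$ specializes to $d+1$, reproducing Theorem~\ref{thm:main}.) I would then set $\lambda_j^* = 1 - \sqrt{1-\bar\lambda_j}$ and define $P^* = \sum_j \lambda_j^* u_j u_j^\top$, which preserves the support and hence has rank at most $d+k-1$.

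Finally, exactly as in the two-group proof, the algebraic identity $2\lambda_j^* - (\lambda_j^*)^2 = \bar\lambda_j$ shows that for every group $A_i$,
\[
\operatorname{loss}(A_i, A_i P^*) = \|\widehat{A_i}\|_F^2 - \sum_{j} \bar\lambda_j \langle A_i^\top A_i, u_j u_j^\top\rangle,
\]
so the fair PCA objective attained by $P^*$ equals $z^* \leq \hat{z}$, which is the SDP value and hence a lower bound on the optimal $d$-dimensional fair PCA value. I expect the main obstacle to be organizing the extreme-point case analysis cleanly when the trace constraint is slack versus tight; the argument above handles both cases uniformly by tracking only the integrality of $j$ and the positivity of $s$, so no further case split is needed.
\end{proofsketch}
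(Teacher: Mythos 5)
Your proposal is correct and follows essentially the same route as the paper: the paper proves Theorem~\ref{thm:k-group} by adding one loss constraint per group to the SDP~\eqref{sdp} and the LP~\eqref{eq:two}--\eqref{eq:four}, observing that an extreme LP solution has at most $k$ fractional $\bar\lambda_i$'s, and concluding rank at most $d+k-1$ with the same $\lambda_j^*=1-\sqrt{1-\bar\lambda_j}$ rounding. Your write-up actually fills in the support-counting step (the $j+r\leq d+k-1$ bound via integrality of $j$ when some $\bar\lambda_i$ is fractional) more explicitly than the paper's brief sketch, and it is sound.
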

In contrast to the case of two groups, when there are more than two groups in the data, it is possible that all optimal solutions to fair
PCA will not assign the same loss
to all groups. However, with \(k-1\) extra dimensions, we can ensure
that the loss of each group remains at most the optimal fairness
objective in \(d\) dimension. The result of Theorem~\ref{thm:k-group}
follows by extending algorithm in Theorem~\ref{thm:main} by adding
linear constraints to SDP and LP for each extra group. An extreme
solution \((\bar{\lambda},z^*)\) of the resulting LP contains at most
\(k\) of \(\lambda_i\)'s that are strictly in between 0 and
1. Therefore, the final projection matrix \(P^*\) has rank at most
\(d+k-1\).

\paragraph{Runtime} We now analyze the runtime of Algorithm~\ref{alg:fairPCA}, which consists of solving SDP \eqref{sdp} and finding an extreme solution to an LP~\eqref{eq:two}-\eqref{eq:four}. The SDP and LP can be solved up to additive error of \(\epsilon>0\) in the objective value   in \(O(n^{6.5}\log(1/\epsilon))\)  \citep{ben2001lectures} and \(O(n^{3.5}\log(1/\epsilon)) \)  \citep{schrijver1998theory} time, respectively. The running time of SDP dominates the algorithm both in theory and practice, and is too slow for practical\ uses for moderate size of \(n\).

We propose another algorithm of solving SDP using the multiplicative
weight (MW) update method. In theory, our MW takes
\(O(\frac{1}{\epsilon^2})\) iterations of solving standard PCA, giving a total of\ \(O(\frac{n^3}{\epsilon^2})\) runtime, which may or may not be faster than
\(O(n^{6.5}\log(1/\epsilon))\) depending on
\(n,\epsilon\). In practice, however, we observe that after appropriately tuning one parameter in MW, the MW\ algorithm achieves accuracy \(\epsilon<10^{-5}\) within tens of iterations, and therefore is used to obtain experimental results in this paper. Our MW can handle data of
dimension up to a thousand with running time in less than a
minute. The details of implementation and analysis of MW method are in
Appendix \ref{sec:mw}.

\section{Experiments}

We use two common human-centric data sets for our experiments.  The
first one is labeled faces in the wild (LFW)~\citep{LFWTech}, the
second is the Default Credit data set~\citep{default-dataset}. We preprocess all data to have its mean at the origin. For the LFW
data, we normalized each pixel value by $\tfrac{1}{255}$.  The gender
information for LFW was taken from~\citet{afifi2017afif4}, who
manually verified the correctness of these labels. For the credit data, since different attributes are measurements of
incomparable units, we normalized the variance of each attribute to be
equal to 1. The code of all experiments is publicly available at \url{https://github.com/samirasamadi/Fair-PCA}.

\paragraph{Results}

We focus on projections into relatively few dimensions,
as those are used ubiquitously in early phases of data exploration. 
As we already saw in Figure~\ref{fig:wpca} left, at lower dimensions,
there is a noticeable gap between PCA's average reconstruction error for men and women on the LFW data set. This gap is at the
scale of up to 10\% of the total reconstruction error when we project to 20 dimensions. This still holds when we subsample male and female
faces with equal probability from the data set, and so men and women
have equal magnitude in the objective function of PCA (Figure~\ref{fig:wpca} right). 

Figure~\ref{fig:re} shows the average reconstruction error of each population (Male/Female, Higher/Lower education) as the result of running vanilla PCA and Fair PCA on LFW and Credit data. 
As we expect, as the number of dimensions increase, the average reconstruction error of every population decreases. For LFW, the original data is in 1764 dimensions (42$\times$42 images), therefore, at 20 dimensions we still see a considerable reconstruction error. For the Credit data, we see that at 21 dimensions, the average reconstruction error of both populations reach 0, as this data originally lies in 21 dimensions. In order to see how fair are each of these methods, we need to zoom in further and look at the average loss of populations.  

Figure~\ref{fig:loss} shows the average loss of each population as the result of applying vanilla PCA and Fair PCA on both data sets.
Note that at the optimal solution of Fair PCA, the average loss of two populations are the same, therefore we have one line for ``Fair loss''. We observe that PCA suffers
much higher average loss for female faces than male faces. After running fair PCA, we observe that the average loss for fair PCA is relatively in the middle of the average loss for male and female. So, there is improvement in
terms of the female average loss which comes with a
cost in terms of male average loss. Similar observation holds for the Credit data set. 
In this context, it appears there is some cost to optimizing for
the less well represented population in terms of the
better-represented population.

\begin{figure}
\includegraphics[width=0.45\textwidth]{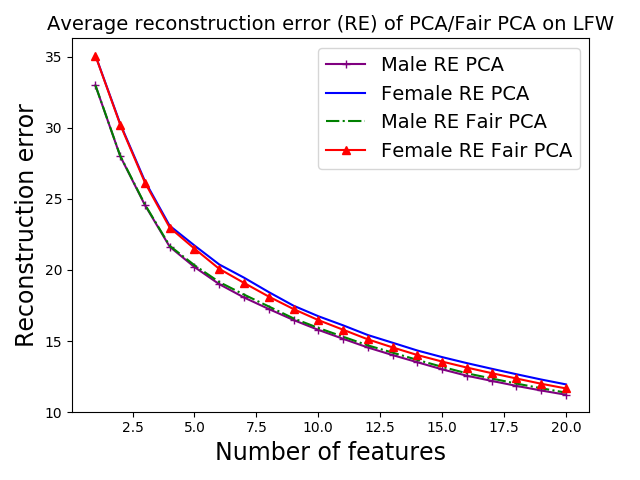}
\includegraphics[width=0.45\textwidth]{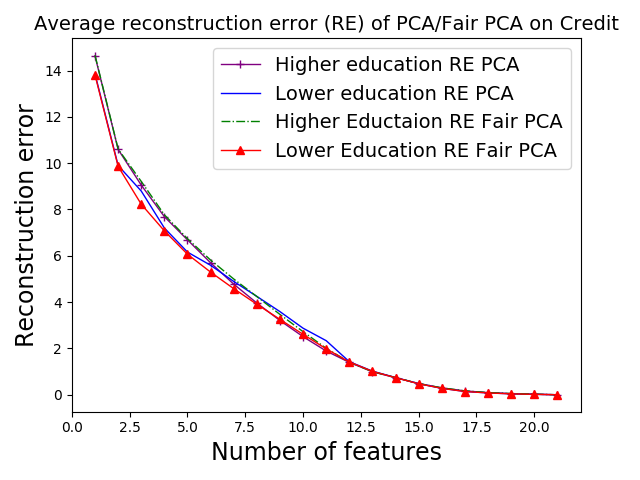}
\caption{Reconstruction error of PCA/Fair PCA on LFW and the Default Credit data set.}\label{fig:re}
\end{figure}

\begin{figure}
\includegraphics[width=0.45\textwidth]{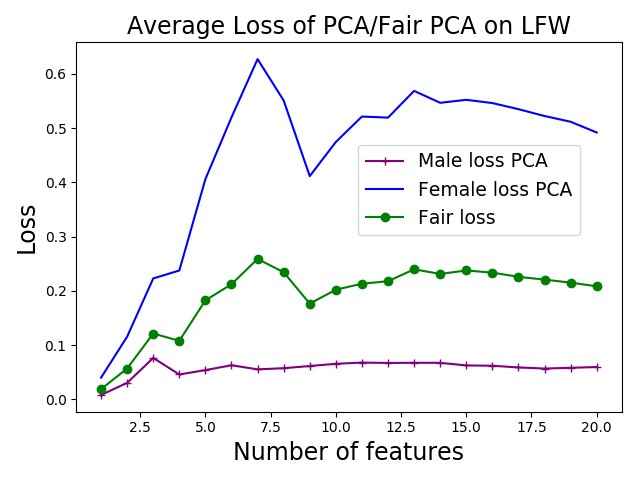}
\includegraphics[width=0.45\textwidth]{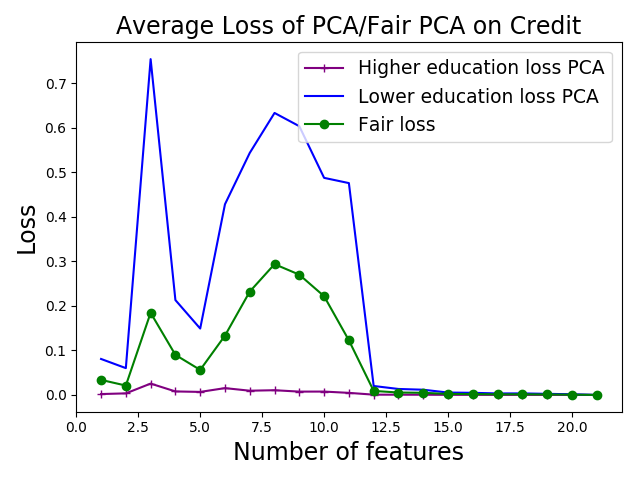}
\caption{Loss of PCA/Fair PCA on LFW and the Default Credit data set.}\label{fig:loss}
\end{figure}

\section{Future work}

This work is far from a complete study of when and how dimensionality
reduction might help or hurt the fair treatment of different
populations.  Several concrete theoretical questions remain using our
framework. What is the complexity of optimizing the fairness
objective? Is it NP-hard, even for $d=1$? Our work naturally extends
to $k$ predefined subgroups rather than just $2$, where the number of
additional dimensions our algorithm uses is $k-1$. Are these
additional dimensions necessary for computational efficiency?

In a broader sense, this work aims to point out another way in which
standard ML techniques might introduce unfair treatment of some
subpopulation. Further work in this vein will likely prove very
enlightening.

\newpage
\section*{Acknowledgements} 
This work was supported in part by NSF awards CCF-1563838, CCF-1717349, and CCF-1717947.

\bibliography{ref,agnostic-bib}{}
\bibliographystyle{plainnat}

\newpage
\appendix

\section{Improved runtime of semi-definite relaxation by multiplicative weight update method} \label{sec:mw}

In this section, we show the multiplicative weight (MW)\ algorithm and
runtime analysis to solve the fair PCA relaxation in two groups for
\(n\times n\)  matrix up to \(\epsilon\) additive error in \(O(\frac{1}{\epsilon^2})\) iterations of solving a standard PCA, such as Singular Value Decomposition (SVD). Because SVD takes  \(O(n^3)\) time, the SDP relaxation \eqref{sdp} for two groups can be solved in \(O(\frac{n^3}{\epsilon^2})\). Comparing to \(O(n^{6.5}\log(1/\epsilon))\)  runtime of an SDP solver
that is commonly implemented with the interior point method
\citep{ben2001lectures}, our algorithm may be faster or slower depending on \(n,\epsilon\).
In practice, however, we tune the parameter of MW algorithm much more aggressively than in theory, and often take the last iterate solution of MW rather the average when the last iterate performs better, which gives a much faster convergence rate. Our runs of MW show that MW converges in at most 10-20 iterations. Therefore, we use MW to implement our fair PCA algorithm.
We note at the conclusion of this section that the algorithm and analysis can be extended to solving fair PCA in \(k\) groups up to additive error \(\epsilon\) in \(O(\frac{\log k}{\epsilon^2})\) iterations. 

Technically, the number of iterations for \(k\) groups is
\(O(\frac{W^2\log k}{\epsilon^2})\), where \(W\) is  the width
of the problem, as defined in \citet{arora2012multiplicative}. \(W\) can usually be bounded by the maximum number of
input or the optimal objective value. For our purpose, if the total variance  of input data over all dimension is \(L\), then the width \(W\) is at most \(L\). For simplicity, we assume \(L\leq 1\) (e.g. by normalization in prepossessing step), hence obtaining the \(O(\frac{\log k}{\epsilon^2})\)  bound on number of iterations.

We first present an algorithmic framework and the corresponding analysis in the next two subsections, and later apply those results to our specific setting of solving the SDP \eqref{sdp} from fair PCA problem. The previous work by  \citet{arora2012multiplicative} shows how we may solve a feasibility problem of an LP using MW technique. Our main theoretical contribution is to propose and analyze the optimization counterpart of the feasibility problem, and the MW algorithm we need to solve such problem. The MW we develop fits more seamlessly into our fair PCA setting and simplifies the algorithm to be implemented for solving the SDP \eqref{sdp}.

\subsection{Problem setup and oracle access}
We first formulate the feasibility problem and its optimization counterpart in this section. The previous and new MW algorithms and their analysis are presented in the following Section \ref{sec:MW-alg-ana}.
\subsubsection{Previous work: multiplicative weight on feasibility problem}
\paragraph{Problem} As in \citet{arora2012multiplicative}, we are given \(A\in \R^{m\times n}\) as an \(m\times n\) real matrix, \(x\in \R^n,b\in\R^m\), and \(\cP\) as a convex set in \(\R^n\), and the goal is to check the feasibility problem
\begin{equation}
\exists?  x\in \cP \ : \ Ax \geq b \label{feasibility}
\end{equation}
by giving a feasible \(x\in\cP\) or correctly deciding that such \(x\) does not exist.

\paragraph{Oracle Access} We assume the existence if an oracle that, given any probability vector \(p\in \Delta_m\) over \(m\) constraints of \eqref{feasibility}, correctly answers a single-constraint problem
\begin{equation}
\exists?  x\in \cP \ : \ p^\top Ax \geq p^\top b   \label{feasibility-oracle}
\end{equation}
by giving a feasible \(x\in\cP\) or correctly deciding that such \(x\) does not exist. We may think of \eqref{feasibility-oracle}  as a weighted version of \eqref{feasibility}, with weights on each constraint \(i\in[m]\) being \(p_i\).

As \eqref{feasibility-oracle} consists of only one constraint, solving \eqref{feasibility-oracle} is much easier than \eqref{feasibility} in many problem settings. For example, in our PCA setting, solving \eqref{sdp} directly is non-trivial, but the weighted version \eqref{feasibility-oracle} is a standard PCA problem: we weight each group \(A,B\) based on \(p\), and then apply a PCA algorithm (Singular Value Decomposition) on the sum of two weighted groups. The solution gives an optimal value of \(p^\top Ax-p^\top\ b\) in \eqref{feasibility-oracle}. More details of application in fair PCA settings are in Section \ref{sec:MW-fair-PCA}

\subsubsection{New setting: multiplicative weight on optimization problem}
\paragraph{Problem} The previous work gives an MW framework for the feasibility question. Here we propose an optimization framework, which asks for the best \(x\in\cP\) rather than an existence of \(x\in\cP\). The optimization framework can be formally stated as,  given \(A\in \R^{m\times n}\) as an \(m\times n\) real matrix, \(x\in \R^n,b\in\R^m\), and \(\cP\) as a convex set in \(\R^n\), we need to solve
\begin{equation}
\min z \ : \ Ax-b+z \cdot \onevec \geq0 , \text{ s.t. }x\in \cP  \label{optimize}
\end{equation}
where \(\onevec\) denotes the \(m\times 1\) vector with entries 1. Denote \(z^*\) the optimum of \eqref{optimize}.

With the same type of oracle access, we may run \eqref{feasibility} for \(O(\log \frac{n}{\epsilon})\) iterations to do binary search for the correct value of optimum \(z^*\) up to an additive error \(\epsilon\). However, our main contribution is to modify the previous multiplicative weight algorithm and the definition of the oracle to solve \eqref{optimize} without guessing the optimum \(z^*\). This improves the runtime slightly (reduce the \(\log(n/\epsilon)\) factor) and simplifies the algorithm.

 \paragraph{Feasibility Oracle Access} We assume the existence of an oracle that, given any probability vector \(p\in \Delta_m\) over \(m\) constraints of \eqref{optimize}, correctly answers a single-constraint problem
\begin{equation}
 \text{Find }  x\in \cP \ : \ p^\top Ax - p^\top b+z^* \geq 0   \label{optimize-oracle2}
\end{equation}

   There is always such \(x\) because multiplying \eqref{optimize} on the left by \(p^\top\) shows that one of such \(x\) is the optimum \(x^*\) of \eqref{optimize}. However, finding one may not be as trivial as asserting problem's feasibility. In general, \eqref{optimize-oracle2} can be tricky to solve since we do not yet know the value of \(z^*\).

\paragraph{Optimization Oracle Access} We define the oracle that, given \(p\in \Delta_m\) over \(m\) constraints of \eqref{optimize}, correctly answers one maximizer of
\begin{equation}
\min z \ : \ p^\top Ax-p^\top b+z \geq0 , \text{ s.t. }x\in \cP  \label{optimize-oracle}
 \end{equation}
which is stronger than and is  sufficient to solve \eqref{optimize-oracle2}. This is because \(x^*\) of \eqref{optimize} is one feasible \(x\) to \eqref{optimize-oracle}, so the optimum \(\hat{z}\) of \eqref{optimize-oracle} is at most \(z^*\). Therefore, the optimum \(x\) by \eqref{optimize-oracle} can be a feasible solution to \eqref{optimize-oracle2}. In many setting, because \eqref{optimize-oracle2} is only one-constraint problem, it is possible to solve the optimization version \eqref{optimize-oracle} instead. For example, in our fair PCA\ on two groups setting, we can solve the \eqref{optimize-oracle} by standard PCA on the union of two groups after an appropriate weighting on each group. More details of application in fair PCA settings are in Section \ref{sec:MW-fair-PCA}.
\subsection{Algorithm and Analysis} \label{sec:MW-alg-ana}
The line of proof follows similarly from  \citet{arora2012multiplicative}. We first state the technical property that the oracle satisfies in our optimization framework, then  show how to use that property to bound the number of iterations.
We fix \(A\in \R^{m\times n}\) as  an \(m\times n\) real matrix, \(x\in \R^n,b\in\R^m\), and \(\cP\) is a convex set in \(\R^n\)\begin{definition}
(analogous to \citet{arora2012multiplicative}) An \((\ell,\rho)\)-bounded \(\oracle\) for parameter \(0\leq \ell\leq \rho\) is an algorithm which, given  \(p\in\Delta_m\), solve \eqref{optimize-oracle2}. Also, there is a fixed \(I\subseteq[m]\) (i.e. fixed across all possible \(p\in\Delta_m\)) of constraints such that for all \(x\in \cP\) output by this algorithm,
\begin{align}
\forall i\in I: \ A_i x - b_i+z^* \in [-\ell,\rho] \\
\forall i\notin I: \ A_i x - b_i +z^*\in [-\rho,\ell]
\end{align}
\end{definition}
Note that even though we do not know \(z^*\), if we know the range of \(A_ix-b_i\) for all \(i\), we can  bound the range of \(z^*\). Therefore, we can still find a useful \(\ell,\rho\) that an oracle satisfies.

Now we are ready to state the main result of this section: that we may solve the optimization version by multiplicative update as quickly as solving the feasibility version of the problem.
\begin{theorem} \label{thm:mw-main}
Let \(\epsilon>0\) be given. Suppose there exists \((\ell,\rho)\)-bounded \(\oracle\) and \(\ell\geq\epsilon/4\) to solving \eqref{optimize-oracle2}. Then there exists an algorithm that solves \eqref{optimize} up to additive error \(\epsilon\), i.e. outputs \(x\in \cP\) such that \begin{equation}
Ax-b+z^* \cdot \onevec \geq-\epsilon
\end{equation}
The algorithm calls \(\oracle\) \(O(\ell\rho\log(m)/\epsilon^2)\) times and has additional \(O(m)\) time per call.
\end{theorem}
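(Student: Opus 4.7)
The plan is to adapt the multiplicative-weights framework of \citet{arora2012multiplicative} from the feasibility question \eqref{feasibility} to the optimization problem \eqref{optimize}, accommodating the fact that the shift $z^{*}$ in the implicit system $A_{i}x-b_{i}+z^{*}\ge 0$ is the unknown we wish to approximate. The algorithm maintains weights $w^{(t)}\in\R^{m}$ (initialized uniformly) and, for $t=1,\dots,T$ with $T=\Theta(\ell\rho\log(m)/\epsilon^{2})$ and step size $\eta=\Theta(\epsilon/\rho)$, forms the distribution $p^{(t)}=w^{(t)}/\|w^{(t)}\|_{1}$, calls the oracle (realized through the optimization oracle \eqref{optimize-oracle}) to obtain $(x^{(t)},\hat{z}^{(t)})$, and updates $w^{(t+1)}$ by the AHK $(\ell,\rho)$-bounded multiplicative rule using the surrogate gain $\tilde m_{i}^{(t)} := A_{i}x^{(t)}-b_{i}+\hat{z}^{(t)}$. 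The returned solution is $\bar x=\frac{1}{T}\sum_{t=1}^{T}x^{(t)}\in\cP$ by convexity.

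Two properties of $\hat{z}^{(t)}$ drive the analysis. First, weak duality gives $\hat{z}^{(t)} = \min_{x\in\cP} p^{(t)\top}(b-Ax)\le \min_{x\in\cP}\max_{i}(b_{i}-A_{i}x)=z^{*}$. Second, the oracle's output satisfies $p^{(t)\top}(Ax^{(t)}-b)+\hat z^{(t)}\ge 0$. Let the ``true'' gain be $m_{i}^{(t)} := A_{i}x^{(t)}-b_{i}+z^{*}$; combining the two facts,
\[
\sum_{i}p_{i}^{(t)}m_{i}^{(t)} \;=\; p^{(t)\top}(Ax^{(t)}-b)+z^{*} \;\ge\; z^{*}-\hat{z}^{(t)} \;\ge\; 0,
\]
so the true expected gain is nonnegative every iteration. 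Because $m^{(t)}$ is $(\ell,\rho)$-bounded by hypothesis, the standard AHK regret analysis gives, for $T=\Theta(\ell\rho\log(m)/\epsilon^{2})$,
\[
\frac{1}{T}\sum_{t=1}^{T}m_{i}^{(t)}\;\ge\;\frac{1}{T}\sum_{t=1}^{T}\sum_{j}p_{j}^{(t)}m_{j}^{(t)}-\epsilon\;\ge\;-\epsilon\qquad\forall\, i\in[m].
\]
By linearity of $m_{i}^{(t)}$ in $x^{(t)}$, the left-hand side equals $A_{i}\bar x-b_{i}+z^{*}$, yielding $A\bar x-b+z^{*}\onevec\ge-\epsilon\onevec$, exactly the desired guarantee. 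The per-round cost is one oracle call plus $O(m)$ weight bookkeeping, matching the stated complexity.

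The main obstacle is justifying that the AHK regret bound, formally stated for updates that use the unknown true gains $m_{i}^{(t)}$, continues to hold when the algorithm actually updates with the computable surrogates $\tilde m_{i}^{(t)}$. Since $\tilde m^{(t)}$ and $m^{(t)}$ differ only by the coordinate-independent constant $\hat z^{(t)}-z^{*}$ at each round, an exponential-weights implementation $w_{i}\leftarrow w_{i}\exp(\pm\eta\tilde m_{i}^{(t)}/\rho)$ produces identically the same distributions $p^{(t)}$ as the ideal update with $m^{(t)}$ (the shift factors out of the normalization). For the first-order $(1\pm\eta\tilde m_{i}^{(t)}/\rho)$ variant the same conclusion is obtained by running the regret inequality directly on $\tilde m$ and then adding the nonnegative quantity $z^{*}-\frac{1}{T}\sum_{t}\hat z^{(t)}$ to both sides. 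Once this equivalence is in hand, the extension to $k$ groups is routine: introduce one extra constraint per additional group and replace $\log m$ by $\log k$ in the iteration count, reproducing the generalization noted in the paper.
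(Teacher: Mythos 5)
Your proposal is correct and follows essentially the same route as the paper's proof: run multiplicative weights with the computable losses, observe that the unknown constant shift ($z^*$ or $\hat z^{(t)}$) factors out of the normalization so the analysis may pretend the true shifted gains $A_i x^{(t)}-b_i+z^*$ were used, get nonnegative expected gain each round from the oracle property (your weak-duality remark is exactly the paper's justification that the optimization oracle \eqref{optimize-oracle} implements \eqref{optimize-oracle2}), and finish with the $(\ell,\rho)$-bounded AHK regret bound plus averaging and linearity. The only nit is the stated step size: relative to the $\tfrac{1}{\rho}$-scaled losses it should scale as $\Theta(\epsilon/\ell)$ (the paper uses $\eta=\epsilon/(8\ell)$, which is where the hypothesis $\ell\geq\epsilon/4$ enters to ensure $\eta\leq 1/2$), not $\Theta(\epsilon/\rho)$, but this is absorbed by the black-box regret bound you invoke.
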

\begin{proof}
The proof follows similarly as Theorem 3.3 in \citet{arora2012multiplicative}, but we include details here for completeness. The algorithm is multiplicative update in nature, as in equation (2.1) of \citet{arora2012multiplicative}. The algorithm starts with uniform \(p^0\in\Delta_m\) over \(m\) constraints. Each step the algorithm asks the \(\oracle\) with input \(p^t\) and receive \(x^t\in\cP\). We use the loss vector \(m^t=\frac{1}{\rho}(Ax^t-b)\) to update the weight \(p^t\) for the next step with learning rate \(\eta\). After \(T\) iterations (which will be specified later), the algorithm outputs \(\bar{x}=\frac{1}{T}\sum_{t=1}^T x^t\).

 Note that using either the loss \(\frac{1}{\rho}(Ax^t-b+z^*)\)  and \(\frac{1}{\rho}(Ax^t-b)\) behaves the same algorithmically due to the renormalization  step on the vector \((p_i^t)_{i=1}^m\). Therefore, just for analysis, we use a hypothetical loss \(m^t=\frac{1}{\rho}(Ax^t-b+z^*)\) to update \(p^t\) (this loss can't be used  algorithmically since we do not know \(z^*\)). By Theorem 2.1 in   \citet{arora2012multiplicative}, for each constraint \(i\in[m]\) and all \(\eta\leq1/2\),
\begin{align}
\sum_{t=1}^T m^t \cdot p^t &\leq \sum_{t=1}^T m_i^t + \eta\sum_{t=1}^T |m_i^t|+\frac{\log m}{\eta} \nonumber \\
&= \frac{1}{\rho}\sum_{t=1}^T (A_i x^t - b_i+z^*) +\frac{\eta}{\rho}\sum_{t=1}^T |A_i x^t - b_i+z^*|+\frac{\log m}{\eta} \label{eq:regret-ineq}
\end{align}
By property \eqref{optimize-oracle2} of the  \(\oracle\),
\begin{equation}
\sum_{t=1}^T m^t \cdot p^t =\frac{1}{\rho} \sum_{t=1}^T \left((p^t)^\top(Ax^t-b)+z^*\right) \geq0 \label{eq:regret-term}
\end{equation}
We now split into two cases. If \(i\in I\), then \eqref{eq:regret-ineq} and \eqref{eq:regret-term} imply
\begin{align*}
0&\leq \frac{1+\eta}{\rho}\sum_{t=1}^T (A_i x^t - b_i+z^*) +\frac{2\eta}{\rho}\sum_{t:A_i x^t - b_i<0}|A_i x^t - b_i+z^*|+\frac{\log m}{\eta} \\
&\leq\frac{1+\eta}{\rho}T(A_i\bar{x}-b_i+z^*) +\frac{2\eta}{\rho}T\ell+\frac{\log n}{\eta}
\end{align*}
Multiplying the last inequality by \(\frac{\rho}{T}\) and rearranging terms, we have
\begin{align}
0 \leq (1+\eta)(A_i\bar{x}-b_i+z^*) + 2\eta \ell +\frac{\rho\log m}{T\eta} \label{eq:tune-eta-1}
\end{align}
If \(i\notin I\), then  \eqref{eq:regret-ineq} and \eqref{eq:regret-term} imply
\begin{align*}
0&\leq \frac{1-\eta}{\rho}\sum_{t=1}^T (A_i x^t - b_i+z^*) +\frac{2\eta}{\rho}\sum_{t:A_i x^t - b_i>0}|A_i x^t - b_i+z^*|+\frac{\log m}{\eta} \\
&\leq\frac{1-\eta}{\rho}T(A_i\bar{x}-b_i) +\frac{2\eta}{\rho}T\ell+\frac{\log n}{\eta}
\end{align*}
Multiplying inequality by \(\frac{\rho}{T}\) and rearranging terms, we have
\begin{align}
0 \leq (1-\eta)(A_i\bar{x}-b_i+z^*) + 2\eta\ell+\frac{\rho\log m}{T\eta} \label{eq:tune-eta-2}
\end{align}
To use \eqref{eq:tune-eta-1} and \eqref{eq:tune-eta-2} to show that \(A_i\bar{x}-b_i+z^*\) is close to 0 simultaneously for two cases, pick \(\eta = \frac{\epsilon}{8\ell}\) (note that \(\eta\leq1/2\) by requiring \(\ell\geq \epsilon/4\), so we may apply Theorem 2.1 in   \citet{arora2012multiplicative}). Then for all \(T \geq\frac{4\rho\log (m)}{\epsilon\eta}=\frac{32\ell\rho\log(m)}{\epsilon^2}\), we have
\begin{equation}
2\eta \ell +\frac{\rho\log m}{T\eta} \leq \frac{\epsilon}{4} +\frac{\epsilon}{4}=\frac{\epsilon}{2}
\end{equation}
Hence, \eqref{eq:tune-eta-1} implies
\begin{equation}
0 \leq (1+\eta)(A_i\bar{x}-b_i+z^*) +\frac{\epsilon}{2} \Rightarrow A_i\bar{x}-b_i+z^* \geq -\frac{\epsilon}{2}
\end{equation}
and \eqref{eq:tune-eta-2} implies
\begin{equation}
0 \leq (1-\eta)(A_i\bar{x}-b_i+z^*) +\frac{\epsilon}{2} \Rightarrow A_i\bar{x}-b_i+z^* \geq -\epsilon
\end{equation}
using the fact that \(\eta \leq 1/2\).
\end{proof}

\subsection{Application of multiplicative update method to the fair PCA problem} \label{sec:MW-fair-PCA}
In this section, we apply MW results for solving LP to solve the SDP relaxation \eqref{sdp} of fair PCA.

\paragraph{LP formulation of fair PCA relaxation} The SDP relaxation \eqref{sdp} of fair PCA can be written in the form \eqref{optimize} as an LP with two constraints
\begin{align} \label{sdp-mw-1}
\min_{P\in\cP,z\in\R} &z \text { s.t. } \\
z&\geq \alpha - \frac{1}{m_1}\langle A^\top A,P\rangle \label{sdp-mw-2} \\
z&\geq \beta -\frac{1}{m_2} \langle B^\top B,P\rangle \label{sdp-mw-3}
\end{align}
for some constants \(\alpha,\beta\), where the feasible region of variables is over a set of PSD matrices:
\begin{equation}
\cP=\{M\in\R^{n\times n}:0\preceq M \preceq I,\text{tr}(M)\leq d\}
\end{equation}
We will apply the multiplicative weight algorithm to solve \eqref{sdp-mw-1}-\eqref{sdp-mw-3}.
\paragraph{Oracle Access}
First, we present an the oracle in Algorithm \ref{alg:oracle}, which is in the form \eqref{optimize-oracle} and therefore can be used to solve \eqref{optimize-oracle2}. As defined in \eqref{optimize-oracle}, the optimization oracle, given a weight vector \(p=(p_1,p_2)\in\Delta_2\), should be able to solve the LP with one weighted constraint obtained from weighting two constraints \eqref{sdp-mw-2} and \eqref{sdp-mw-3} by \(p\). However, because both constraints involve only dot products of same variable \(P\) with constant matrices \( A^\top A\) and \( B^\top B\), which are linear functions, the weighted constraint will involve the  dot product of the same variable \(P\) with weighted sum of those constant matrices \(\frac{p_1}{m_1}A^\top A +\frac{p_2}{m_2} B^\top B\).

\begin{algorithm} 
\caption{Fair PCA oracle (oracle to Algorithm \ref{alg:mw})} 
\label{alg:oracle} 
    \SetKwInOut{Input}{Input}
    \SetKwInOut{Output}{Output}

    \Input{\(p=(p_1,p_2)\in\Delta_2\), \(\alpha,\beta\in\R,\ A\in\R^{m_1\times n},B\in\R^{m_2\times n}\)}
    \Output{\(\argmin\limits_{P,z_1,z_2}\ \ p_1z_1+p_2z_2 \text{, \ subject to}\) \\ \(z_1=\alpha -\frac{1}{m_1} \langle A^\top A,P\rangle\),\\ \(z_2=\beta - \frac{1}{m_2}\langle B^\top B,P\rangle\), \\\(P\in\cP=\{M\in\R^{n\times n}:0\preceq M \preceq I,\text{Tr}(M)\leq d\}\)}
    Set \(V\in\R^{n \times d}\) to be the matrix with top \(d\) principles components of \(\frac{p_1}{m_1}A^\top A + \frac{p_2}{m_2} B^\top B\) as columns\;
    \Return \(P^*=VV^\top\)\ , \(z_1^*=\alpha -\frac{1}{m_1}  \langle A^\top A,P^*\rangle,z_2^*=\beta - \frac{1}{m_2}\langle B^\top B,P^*\rangle\)\;
\end{algorithm}

\paragraph{MW Algorithm} Our multiplicative weight update algorithm for solving fair PCA relaxation \eqref{sdp-mw-1}-\eqref{sdp-mw-3} is presented in Algorithm \ref{alg:mw}.
The algorithm follows exactly from the construction in Theorem \ref{thm:mw-main}. The runtime analysis of our MW Algorithm \ref{alg:mw} follows directly from the same theorem.
\begin{algorithm} 
\caption{Multiplicative weight update for fair PCA} 
\label{alg:mw} 
    \SetKwInOut{Input}{Input}
    \SetKwInOut{Output}{Output}

    \Input{\(\alpha,\beta\in\R,\ A\in\R^{m_1\times n},B\in\R^{m_2\times n}\), \(\eta>0\), positive integer \(T\)}
    \Output{\(\argmin\limits_{P,z}\ \ z \text{, \ subject to}\) \\ \(z\geq\alpha - \frac{1}{m_1}  \langle A^\top A,P\rangle\),\\ \(z\geq\beta - \frac{1}{m_2}\langle B^\top B,P\rangle\), \\\(P\in\cP=\{M\in\R^{n\times n}:0\preceq M \preceq I,\text{Tr}(M)\leq d\}\)}

    Initialize \(p^0=(1/2,1/2)\)\;
    \For{\(t=1,\ldots,T\)}{
    \((P_t,m_1^t,m_2^t)\leftarrow \oracle(p^{t-1},\alpha,\beta,A,B)\)\;
    \(\hat{p}_i^t\leftarrow p_i^{t-1}e^{\eta m_i^t}\), for \(i=1,2\)\;
    \(p_i^t\leftarrow \hat{p}_i^t/(\hat{p}_1^t+\hat{p}_2^t)\), for \(i=1,2\)\;
    }
    \Return \(P^*=\frac{1}{T}\sum_{t=1}^T P_t\) , \(z^*=\max\{\alpha - \frac{1}{m_1}  \langle A^\top A,P^*\rangle,\beta- \frac{1}{m_2}\langle B^\top B,P^*\rangle\}\)
\end{algorithm}

\begin{corollary}
Let \(\epsilon>0\). Algorithm \ref{alg:mw} finds a near-optimal (up to additive error of \(\epsilon\)) solution \(P\) to  \eqref{sdp-mw-1}-\eqref{sdp-mw-3} in \(O\left(\frac{1}{\epsilon^2}\right)\) iterations of solving standard PCA, and therefore in \(O(\frac{n^3}{\epsilon^2})\) running time.
\end{corollary}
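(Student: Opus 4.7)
\begin{proofof}{Corollary (MW for fair PCA)}
The plan is to apply Theorem~\ref{thm:mw-main} to the two-constraint LP \eqref{sdp-mw-1}-\eqref{sdp-mw-3} (so $m=2$), using Algorithm~\ref{alg:oracle} as the required oracle, and then multiply the resulting iteration bound by the per-iteration cost of one SVD. There are essentially three things to check: (i) Algorithm~\ref{alg:oracle} is a correct optimization oracle in the sense of \eqref{optimize-oracle}, (ii) its outputs satisfy the $(\ell,\rho)$-bounded condition with $\ell,\rho = O(1)$, and (iii) each oracle call costs $O(n^3)$.

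For (i), I would argue that the oracle problem \eqref{optimize-oracle} with weights $p=(p_1,p_2)$ reduces, after eliminating $z$, to maximizing the linear functional $P \mapsto \langle \tfrac{p_1}{m_1}A^\top A + \tfrac{p_2}{m_2}B^\top B,\, P\rangle$ over $\cP = \{M : 0\preceq M \preceq I,\ \Tr(M)\leq d\}$. Since the weighted sum is PSD, the Ky~Fan maximum principle gives that the optimum equals the sum of the top $d$ eigenvalues of this matrix and is attained at $P^* = VV^\top$, where the columns of $V$ are the top-$d$ eigenvectors. This is exactly what Algorithm~\ref{alg:oracle} computes via a single SVD, which takes $O(n^3)$ time, settling (iii). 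Because the output $P^*$ automatically lies in $\cP$ and then the returned $(z_1^*, z_2^*)$ saturate the two constraints, the oracle also meets the weaker requirement \eqref{optimize-oracle2} needed by Theorem~\ref{thm:mw-main}.

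For (ii), the main technical point is controlling the range of each constraint value $A_i P^* - b_i + z^*$ uniformly across all possible weight vectors $p$ and all oracle outputs $P^*$. Since every oracle output is a rank-$d$ projector, $\langle A^\top A, P^*\rangle \in [0,\|\widehat{A}\|_F^2]$ and similarly for $B$; consequently each constraint body $\tfrac{1}{m_i}\langle\cdot,P^*\rangle - \alpha_i$ lives in a bounded interval determined entirely by the total variance $L$ of the data across the $d$ top directions. Under the normalization $L\leq 1$ stated in the appendix, this gives $\ell,\rho = O(1)$, and if $\ell<\epsilon/4$ we can harmlessly enlarge $\ell$ to $\epsilon/4$ (as allowed by the theorem's hypothesis). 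The most delicate part is checking that $z^*$ itself lies in the same $O(1)$-sized range so that adding it does not blow up the width; this follows because $z^*$ is sandwiched between $0$ and $\max_i \alpha_i$, which is itself $O(L)$.

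Plugging $m=2$, $\ell,\rho=O(1)$ into Theorem~\ref{thm:mw-main} gives $T = O(\ell\rho \log m/\epsilon^2) = O(1/\epsilon^2)$ oracle calls, each an $O(n^3)$ PCA, plus $O(m)=O(1)$ weight-update work per iteration. Averaging the iterates yields $\bar P \in \cP$ satisfying both constraints within $\epsilon$ of $z^*$, so $\max\{\tfrac{1}{m_1}(\|\widehat A\|_F^2 - \langle A^\top A,\bar P\rangle),\ \tfrac{1}{m_2}(\|\widehat B\|_F^2 - \langle B^\top B,\bar P\rangle)\} \leq z^* + \epsilon$, which is the claimed $\epsilon$-approximation to the SDP optimum. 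The total running time is $O(n^3/\epsilon^2)$. The main obstacle I expect is not the MW argument itself (which is essentially a black-box invocation of Theorem~\ref{thm:mw-main}) but rather the bookkeeping to justify a clean $O(1)$ width under the appendix's normalization convention and to make sure averaging over iterates produces a matrix that still lies in $\cP$ (which it does by convexity of $\cP$).
\end{proofof}
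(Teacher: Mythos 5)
Your proposal is correct and follows essentially the same route as the paper: verify that Algorithm~\ref{alg:oracle} is a valid oracle (a single top-$d$ SVD of the weighted sum $\frac{p_1}{m_1}A^\top A + \frac{p_2}{m_2}B^\top B$), show via the normalization of the data and the optimality of $\widehat{A},\widehat{B}$ that each constraint value and $z^*$ lie in a constant-size interval so the oracle is $(O(1),O(1))$-bounded, and then invoke Theorem~\ref{thm:mw-main} with $m=2$ to get $O(1/\epsilon^2)$ oracle calls at $O(n^3)$ per SVD. The extra details you supply (Ky~Fan's principle for the oracle's correctness and convexity of $\cP$ for the averaged iterate) are consistent with what the paper handles in its oracle discussion and in Theorem~\ref{thm:mw-main} itself.
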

\begin{proof}
We first check that the oracle presented in Algorithm \ref{alg:oracle} satisfies \((\ell,\rho)\)-boundedness and find those parameters. We may normalize the data so that the variances of \( \frac{A^\top A}{m_1}\) and \(\frac{B^\top B}{m_2}\) are bounded by 1. Therefore, for any PSD matrix \(P\preceq I\), we have \(\frac{1}{m_1}\langle A^\top A,P\rangle\leq 1\). In addition, in the application to fair PCA setting, we have \(\alpha = \frac{ \|\widehat{A}\|_F^2}{m_1}\). Hence,  \(\frac{1}{m_1}\langle A^\top A,P\rangle\leq \alpha\) for any feasible \(P\in\cP=\{M\in\R^{n\times n}:0\preceq M \preceq I,\text{Tr}(M)\leq d\}\) by the definition of \( \widehat{A}\) (recall Definition \ref{def:fair-PCA}). Therefore,
\begin{equation}
0 \leq \alpha -\frac{1}{m_1}\langle A^\top A,P\rangle \leq 1 ,\forall P\in\cP
\end{equation}
and similarly \( \beta -\frac{1}{m_2}\langle B^\top B,P\rangle \in [0,1]\). Hence, the optimal solution of Algorithm \ref{alg:mw}  satisfies \(z^*\in[0,1]\). Therefore, the oracle is \((1,1)\)-bounded.

Next we analyze the runtime of Algorithm \ref{alg:mw}. By Theorem \ref{thm:mw-main}, Algorithm \ref{alg:mw} calls the oracle \(O(1/\epsilon^2)\) times. The bottleneck in an oracle call is solving PCA on the weighted sum of two groups, which takes \(O(n^3)\) time. The additional processing time to update the weight is negligible compared to this \(O(n^3)\) time for solving PCA.
\end{proof}

\paragraph{MW for More Than Two Groups} Algorithms \ref{alg:oracle} and \ref{alg:mw} can be naturally extended to \(k\) groups. Theorem  \ref{thm:mw-main} states that we need \(O(\frac{\log k}{\epsilon^2})\) calls to the oracle with additional \(O(k)\) time per call (to update the weight for each loss). In each call, we must compute the weighted sum of \(k\) matrices of dimension \(n \times n\), which takes \(O(kn^2)\) arithmetic operations and perform SVD. In natural settings, \(k\) is much smaller than \(n\), and hence the runtime \(O(n^3)\) of SVD in each oracle call will dominate.

\section{Proofs}
\label{app:proof}

\begin{proofof}{Lemma~\ref{lem:A}}
Since $\rank(U) \leq d$, $\rank(V) \leq d$ and thus $\rank(\left[ \begin{array}{c} A \\ B \end{array} \right] VV^T) \leq d$. We will first show that $loss(A, AVV^T) \leq loss(A, U_A)$.

\textbf{Step 1.} Since $\{v_1,\ldots,v_d\}$ is an orthonormal basis of row space of $U$,  for every row of $U_A$, we have that $(U_A)_i = c_iV^T$ for some $c_i \in \mathbb{R}^{1 \times d}$.

\textbf{Step 2.} We show if we $c_i \rightarrow A_iV$ and consequently substitute the row $(U_A)_i \rightarrow A_i VV^T$, the value of $\|A_i - (U_A)_i\|$ decreases.
$$
\|A_i - (U_A)_i \|^2 = \|A_i - c_i V^T\|^2 = A_i {A_i}^T - 2A_i V {c_i}^T + c_i {c_i}^T
$$
Here we used the fact that $V^TV=I$. Minimizing the right hand side with respect to $c_i$, we get that $c_i = A_i V$.

\textbf{Step 3. }Step 2 proved that for every $i$, $\|A_i - A_i VV^T \|^2 \leq \|A_i - (U_A)_i \|^2$. Remember that

\begin{align*}
loss(A, U_A)  = \|A-U_A\|_F^2 - \|A-\widehat{A}\|_F^2 &= \sum \|A_i - (U_A)_i \|^2 - \|A-\widehat{A}\|_F^2    \\
loss(A, AVV^T)  = \|A-AVV^T\|_F^2 - \|A-\widehat{A}\|_F^2 &= \sum \|A_i - A_iVV^T \|^2 - \|A-\widehat{A}\|_F^2
\end{align*}
This finished the proof that $loss(A, AVV^T) \leq loss(A, U_A)$.  Similarly, we can see that $loss(B, BVV^T) \leq loss(B, U_B)$. Therefore
\begin{align*}
        f(\left[ \begin{array}{c} A \\ B \end{array} \right] VV^T) & = \max \big( \frac{1}{|A|} loss(A,AVV^T),  \frac{1}{|B|}  loss (B, BVV^T)\big)  \\
        & \leq \max \big( \frac{1}{|A|} loss(A,U_A),  \frac{1}{|B|}  loss (B, U_B)\big)\\
        &= f(U)
\end{align*}
\end{proofof}

\begin{proofof}{Lemma~\ref{lem:B}}
From Lemma~\ref{lem:optPCA}, we know that there exist a matrix $W_A \in \mathbb{R}^{n\times d}$ such that $W_A^T W_A = I$ and $\widehat{A} = A W_A W_A^T$. Considering this and the fact that $V^TV=I$
\begin{align*}
        loss(A, AVV^T) &= \|A-AVV^T\|_F^2 - \|A-AW_AW_A^T\|_F^2 \\
        &= \sum_{i} \|A_i - A_i VV^T\|^2 - \|A_i - A_i W_A W_A^T\|^2 \\
        & = \sum_i A_i A_i^T -  A_i VV^T A_i^T - (\sum_i A_i A_i^T - \sum_i A_i W_A W_A^T)\\
        & =\sum_{i} A_i W_A W_A^T A_i^T - \sum_{i} A_i VV^T A_i^T \\
        \sum_{i} A_i W_A W_A^T A_i^T &= \sum_i \|A_i W_A\|^2 = \|AW_A\|_F^2 = \|AW_A W_A^T\|_F^2 = \|\widehat{A}\|_F^2 \\
        \sum_{i} A_i VV^T A_i^T &= \sum_i \|A_i V\|^2 = \|AV\|_F^2 = \sum_i \|Av_i\|^2 \\
        \sum_{i} A_i VV^T A_i^T &= \sum_i \|A_i V\|^2 = \|AV\|_F^2 = \Tr(V^TA^TAV) = \Tr(VV^TA^TA) = \langle A^TA, VV^T\rangle
\end{align*}
Therefore $loss(A, AVV^T) = \|\widehat{A}\|_F^2 - \sum_{i=1}^{d} \|Av_i\|^2 = \|\widehat{A}\|_F^2 - \langle A^TA, VV^T\rangle$.

\begin{align*}
        \|A - AVV^T\|_F^2 &= \sum_i \|A_i - A_iVV^T\|^2 = \sum_i A_i A_i^T - \sum_i A_iVV^TA_i^T\\
        &= \|A\|_F^2 - \sum_i \|Av_i\|^2 = \|A\|_F^2 - \|AV\|_F^2 \\
\end{align*}

\end{proofof}

\begin{proofof}{Lemma~\ref{lem:C}}

We prove that the value of function $g_A$ at its local minima is equal to its value at its global minimum, which we know is the subspace spanned by a top $d$ eigenvectors of $A^TA$. More precisely, we prove the following: Let $\{v_1, \ldots, v_n\}$ be an orthonormal basis of eigenvectors of
$A^TA$ with corresponding eigenvalues
$\lambda_1 \geq \lambda_2 \geq \ldots \geq \lambda_n$ where ties are
broken arbitrarily. Let $V^*$ be the subspace spanned by
$\{v_1,\ldots, v_d\}$ and let $U$ be some $d$-dimensional subspace
s.t. $g_A(U) > g_A(V^*)$. There is a continuous path from $U$ to $V^*$
s.t. the value of $g_A$
is monotonically decreasing for every $d$-dimensional subspace on the path.

Before starting the proof, we will make a couple of notes which would be used throughout the proof. First note that $g_A(V)$ is well-defined i.e., the value of $g_A(V)$ is only a function of the subspace $V$. More precisely, $g_A(V)$ is invariant with respect to different choices of orthonormal basis of the subspace $V$. Second, given Lemma~\ref{lem:B}, $g_A(V) = \|A\|_F^2 - \sum_{i}\|Av_i\|^2$. Therefore, proving that $g_A(V)$ is decreasing is equivalent to proving that $\sum_i \|Av_i\|^2$ is increasing as a function of any choice of orthonormal basis of the subspaces on the path.

$g_A(U) > g_A(V^*)$ therefore $U\neq V^*$. Let $k$ be the smallest index such that $v_k \notin U$. Extend $\{v_1,\ldots, v_{k-1}\}$ to an orthonormal basis of $U$: $\{v_1, \ldots, v_{k-1}, v'_k, \ldots, v'_d\}$. Let $q \geq k$ be the smallest index such that $\|Av_q\|^2 > \|Av'_q\|^2$. Such an index $q$ must exist given that $g_A(U)> g_A(V^*)$. Without loss of generality we can assume that $q=1$ (this will be clear throughout the proof). Therefore, we assume that $v_1$, the top eigenvector of $A^TA$, is not in $U$ and that it strictly maximizes the function $\|Au\|^2$ over the space of unit vectors $u$. Specifically, for any unit vector $u\in U$, $\|Au\|^2<  \|Av_1\|^2 = \lambda_1$.

Let $v_1 = \sqrt{1-a^2}z_1 + az_2$ where $z_1 \in U$ and $z_2 \perp U$, $\|z_1\|=\|z_2\|=1$ i.e., the projection of $v_1$ to $U$ is $\sqrt{1-a^2}z_1$. We distinguish two cases:

\paragraph{Case \boldmath $z_1 = 0$.} $v_1\perp U$. Let $w = \sqrt{1-\epsilon^2} u_1 + \epsilon v_1$. $\|w\|=1$. Note that $\{w, u_2, \ldots, u_d\}$ is an orthonormal set of vectors. We set $U_{\epsilon}= span\{w, u_2, \ldots, u_d\}$. We show that $g_A(U_\epsilon) < g_A (U)$. Using the formulation of $g$ from Lemma~\ref{lem:B}, we need to show that $\|Aw\|^2 + \|Au_2\|^2+\ldots+\|Au_d\|^2 > \|Au_1\|^2+\|Au_2\|^2+\ldots + \|Au_d\|^2$ or equivalently that $\|Aw\|^2 > \|Au_1\|^2$.

\begin{align*}
\|Aw\|^2 - \|Au_1\|^2 &= \|A(\sqrt{1-\epsilon^2}u_1+\epsilon v_1)\|^2 - \|Au_1\|^2\\
&=(\sqrt{1-\epsilon^2}u_1^T+\epsilon v_1^T)A^TA(\sqrt{1-\epsilon^2}u_1+\epsilon v_1) - \|Au_1\|^2\\
&=(1-\epsilon^2)u_1^TA^TAu_1+\epsilon^2v_1^TA^TAv_1+2\sqrt{1-\epsilon^2}\epsilon u_1^TA^TAv_1 - \|Au_1\|^2\\
&=(1-\epsilon^2)\|Au_1\|^2+\epsilon^2\lambda_1+2\epsilon \sqrt{1-\epsilon^2} u_1^TA^TAv_1 - \|Au_1\|^2 \\
&= \epsilon^2 (\lambda_1 - \|Au_1\|^2) + 2\epsilon \sqrt{1-\epsilon^2} u_1^TA^TAv_1 \\
\end{align*}
where $ u_1^TA^TAv_1 = u_1^T (\lambda_1 v_1)= \lambda_1 u_1^Tv_1 =0$ since $v_1$ is an eigenvector of $A^TA$ and $v_1 \perp u_1$. This, and considering the fact that $\|Au_1\|^2 < \lambda_1$
\begin{align*}
        \|Aw\|^2 - \|Au_1\|^2 &= \epsilon^2(\lambda_1 - \|Au_1\|^2) > 0
\end{align*}
Therefore, $\|Aw\|^2 > \|Au_1\|^2$ and thus $g_A(U_\epsilon) < g_A (U)$.

\paragraph{Case \boldmath $z_1 \neq 0$}. Note that $z_2 \neq 0$ either since we picked $v_1 \notin U$. Let's extend $\{z_1\}$ to an orthonormal basis of $U$:  $\{z_1,u_2, \ldots, u_k\}$. We will transform $U$ s.t. the resulting subspace $U_1$ is the span of $v_1, u_2,\ldots, u_k$. This can then be repeated orthogonal to $v_1$ till the subspace becomes $V^*$.

For small enough  $\epsilon > 0$, consider the unit vector $w = \sqrt{1-\epsilon^2}z_1 + \epsilon z_2$. We will move $U$ to $U_\epsilon:=span \{w,u_2,\ldots,u_d\}$. The latter is an orthonormal representation since both $z_1$ and $z_2$ are orthogonal to all of $u_2, \ldots, u_d$  and $w$ is in the span of $z_1, z_2$. We will prove that $g_A(U_\epsilon) < g_A(U)$. Given Lemma~\ref{lem:B}, since the chosen orthonormal basis of these two subspaces differ only in $w$ and $z_1$, it suffices to show that $\|Aw\|^2 > \|Az_1\|^2$.
We can write
\begin{align*}
w &= \left(\sqrt{1-\epsilon^2} - \frac{\epsilon\sqrt{1-a^2}}{a}\right)z_1 +  \frac{\epsilon}{a}(\sqrt{1-a^2}z_1 + az_2 )=\left(\sqrt{1-\epsilon^2} - \frac{\epsilon\sqrt{1-a^2}}{a}\right)z_1 +  \frac{\epsilon}{a}v_1.
\end{align*}
Therefore, noting that $A^TAv_1 = \lambda_1 v_1$ since $v_1$ is an eigenvector with eigenvalue $\lambda_1$ and $z_1^Tv_1=\sqrt{1-a^2}$,
\begin{align*}
&\|Aw\|^2 \\
&= \left( \sqrt{1-\epsilon^2} - \frac{\epsilon\sqrt{1-a^2}}{a} \right)^2\|Az_1\|^2 + \frac{\epsilon^2}{a^2} \|Av_1\|^2 + 2\frac{\epsilon}{a}\left(\sqrt{1-\epsilon^2} - \frac{\epsilon\sqrt{1-a^2}}{a}\right)z_1^TA^TAv_1\\
&=\left(1-\epsilon^2 + \frac{\epsilon^2(1-a^2)}{a^2}-2\frac{\epsilon\sqrt{(1-\epsilon^2)(1-a^2)}}{a}\right)\|Az_1\|^2 + \frac{\epsilon^2}{a^2}\lambda_1 \\
& + 2\frac{\epsilon}{a}\left(\sqrt{1-\epsilon^2} - \frac{\epsilon\sqrt{1-a^2}}{a}\right)\lambda_1 z_1^Tv_1
=\left(1-2\epsilon^2 + \frac{\epsilon^2}{a^2}-2\frac{\epsilon\sqrt{(1-\epsilon^2)(1-a^2)}}{a}\right)\|Az_1\|^2 \\
&+ \left(\frac{\epsilon^2}{a^2} + 2\frac{\epsilon\sqrt{(1-\epsilon^2)(1-a^2)}}{a} - 2\frac{\epsilon^2(1-a^2)}{a^2}\right)\lambda_1 \\
&=\|Az_1\|^2 + (\lambda_1-\|Az_1\|^2)\left(2\frac{\epsilon\sqrt{(1-\epsilon^2)(1-a^2)}}{a}+2\epsilon^2 - \frac{\epsilon^2}{a^2}\right)\\
&>\|Az_1\|^2
\end{align*}
where the last inequality follows since $\lambda_1 > \|Az_1\|^2$ and we can choose $ 0 < \epsilon < \frac{1}{1+C}$ for $C= 4a^2(1-a^2)$ so that $ 2\frac{\epsilon\sqrt{(1-\epsilon^2)(1-a^2)}}{a} > \frac{\epsilon^2}{a^2}$ .
Thus, $\|Aw\|^2 > \|Az_1\|^2$ and therefore $g_A(U_\epsilon) < g_A(U)$.
proving the claim.

\end{proofof}

\end{document}